\newtheorem{thm}{Theorem}
\newtheorem{deff}{Definition}
\newtheorem{lem}{Lemma}
\newtheorem{rem}{Remark}
\newcommand*{\QEDA}{\hfill\ensuremath{\blacksquare}}%
\newcommand{\bM}{\bm{M}}
\newcommand{\bC}{\bm{C}}
\newcommand{\bg}{\bm{g}}
\newcommand{\bq}{\bm{q}}
\newcommand{\bp}{\bm{p}}
\newcommand{\be}{\bm{e}}
\newcommand{\bx}{\bm{x}}
\newcommand{\btau}{\bm{\tau}}
\newcommand{\by}{\bm{y}}
\newcommand{\bD}{\bm{D}}
\newcommand{\bK}{\bm{K}}
\newcommand{\bJ}{\bm{J}}
\newcommand{\bI}{\bm{I}}
\newcommand{\bLambda}{\bm{\Lambda}}
\newcommand{\bzeta}{\bm{\zeta}}
\newcommand{\bGamma}{\bm{\Gamma}}
\newcommand{\bff}{\bm{f}}
\newcommand{\bw}{\bm{w}}
\newcommand{\bA}{\bm{A}}
\newcommand{\bB}{\bm{B}}
\newcommand{\bzero}{\bm{0}}
\newcommand{\bP}{\bm{P}}
\newcommand{\bQ}{\bm{Q}}
\begin{document}
%
\title{A Passivity-based Nonlinear Admittance Control\\with Application to Powered Upper-limb Control\\under Unknown Environmental Interactions}

%
%
%

\author{Min Jun Kim${}^{a,b}$, Woongyong Lee${}^{a}$,  Jae Yeon Choi${}^{c}$,  Goobong Chung${}^{c}$, Kyung-Lyong Han${}^{d,e}$, Il Seop Choi${}^{e}$, Christian Ott${}^{b}$, and Wan Kyun Chung${}^{a}$,~\IEEEmembership{Fellow,~IEEE}
\thanks{This work was supported in part by the convergence technology development program for bionic from the National Research Foundation of Korea (No. 2014M3C1B2048632), and by Technical Research Laboratories, POSCO.}
\thanks{${}^{a}$Robotics Laboratory, School of Mechanical Engineering, Pohang University of Science and Technology (POSTECH), Pohang, 37673, Gyung-buk, Korea 
{\tt\small \{mjkim0229, leewoongyong, wkchung\}@postech.ac.kr}}%
\thanks{${}^{b}$Robotics and Mechatronics Center, German Aerospace Center (DLR), Wessling, Germany 
{\tt\small \{minjun.kim, christian.ott\}@dlr.de}}%
\thanks{${}^{c}$Korea Institute of Robot and Convergence (KIRO), Pohang, 37666, Gyung-buk, Korea 
{\tt\small \{bluedus, goobongc\}@kiro.re.kr}}%
\thanks{${}^{d}$Artificial Intelligence Team, Samsung Electronics, Suwon-si, 16677, Gyeonggi-do, Korea
	{\tt\small kl.han@samsung.com}}%
\thanks{${}^{e}$POSCO Technical Research Laboratories, Pohang, 37859, Gyung-buk, Korea
{\tt\small \{kevinhan75, i.s.choi\}@posco.com}}%
}

\maketitle

\begin{abstract}
This paper presents an admittance controller based on the passivity theory for a powered upper-limb exoskeleton robot which is governed by the nonlinear equation of motion. Passivity allows us to include a human operator and environmental interaction in the control loop. The robot interacts with the human operator via F/T sensor and interacts with the environment mainly via end-effectors. Although the environmental interaction cannot be detected by any sensors (hence unknown), passivity allows us to have natural interaction. An analysis shows that the behavior of the actual system mimics that of a nominal model as the control gain goes to infinity, which implies that the proposed approach is an admittance controller. However, because the control gain cannot grow infinitely in practice, the performance limitation according to the achievable control gain is also analyzed. The result of this analysis indicates that the performance in the sense of infinite norm increases linearly with the control gain. In the experiments, the proposed properties were verified using 1 degree-of-freedom testbench, and an actual powered upper-limb exoskeleton was used to lift and maneuver the unknown payload.
\end{abstract}

\begin{IEEEkeywords}
Admittance control, passivity-based control, powered upper-limb, exoskeleton
\end{IEEEkeywords}

%
\IEEEpeerreviewmaketitle

\section{Introduction}
\label{sec:intro}
\IEEEPARstart{I}{n} recent decades, various types of exoskeletons (or wearable robots) have been developed for various purposes \cite{guizzo2005rise, marcheschi2011body, noda2014development, hsieh2015mechanical ,kuan2018high}. Among them, carrying heavy payloads is one of the most relevant applications in the industries. Fig. \ref{fig:hardware} shows a powered upper-limb exoskeleton robot developed to carry arbitrary heavy payloads.

\begin{figure}
\centering
\subfigure[]
{\includegraphics[scale=0.31]{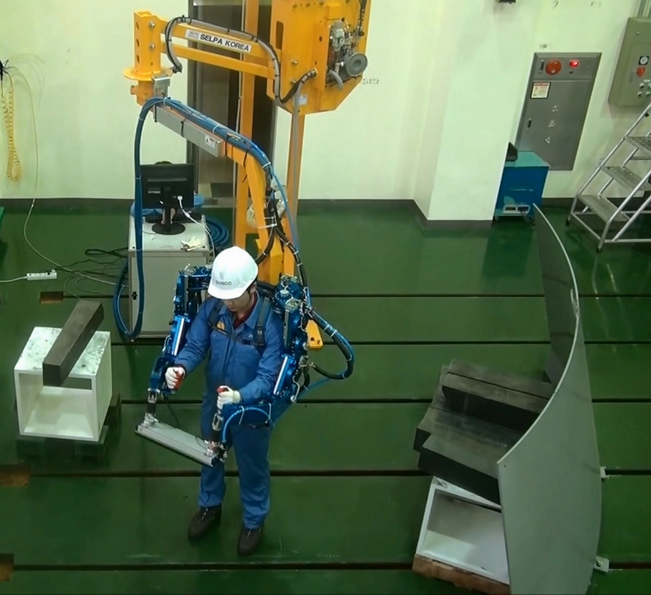}}
\centering
\subfigure[]
{\includegraphics[scale=0.28]{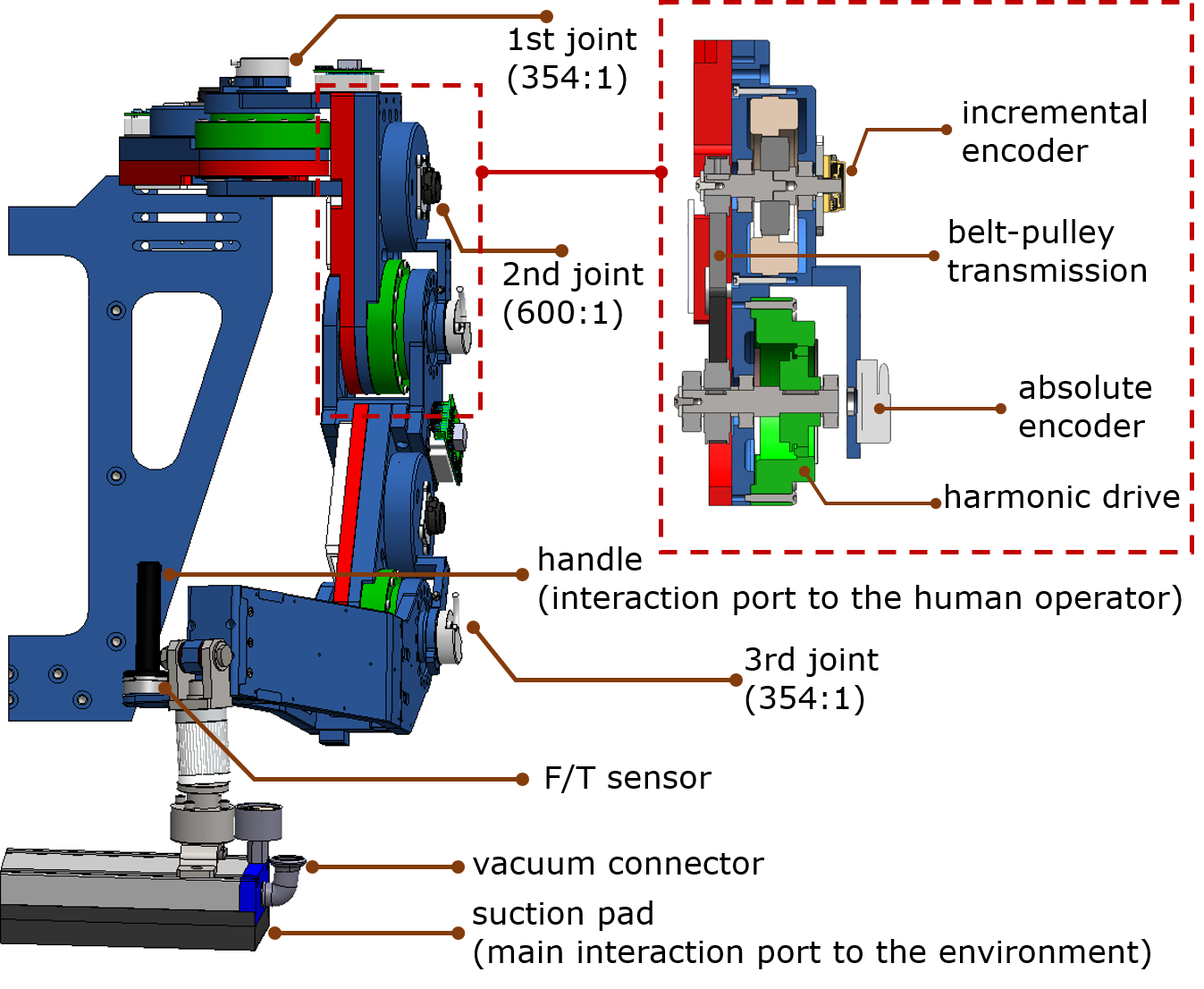}}
\caption{Powered upper-limb exoskeleton robot for carrying heavy payloads. (a) A mock-up of the workplace where the robot will be used. (b) The descriptions of left arm. The highlight (red dashed outline) shows details of the second joint; the other joints are similar.}
\label{fig:hardware}
\end{figure}

The main scenario considered in this study, which is typical of industrial applications, involves the operator approaching and lifting a payload of unknown mass (of up to 50 $\mathrm{kg}$) and then moving it to another location.\footnote{ \label{footnote1}
To accommodate a larger workspace, the actual system includes additional elements such as horizontal passive joints and a waist joint. These elements were neglected in the present study because they are irrelevant to the main theme of the paper.} 
A pneumatic suction pad on the end-effector adheres to the payload, where an air pump is used to create a vacuum. The mass of the payload, which is not known, varies from 10 to 50 $\mathrm{kg}$ (i.e., up to 25 $\mathrm{kg}$ for each arm). During the operation, the robot interacts with the environment as well as the human operator. The interaction with a human operator can be measured by a F/T sensor attached to the handle. Interactions with the environment, which cannot be detected by any sensor, mainly occur when the suction pad attaches to or releases the payload.

There are two major difficulties in accomplishing the scenario. The first one is that the robot dynamics is significantly affected by the inertia of the payload, which is not known. Hence, the robot dynamics should be treated to be unknown. The second one is the absence of the force sensor that measures (or detects) the interaction between the robot and environment. Recall that, in our scenario, environmental interaction occurs at the suction pad when attaching to or releasing a payload. Due to the lack of information, realizing natural interaction with the environment becomes nontrivial.

In this paper, these difficulties are alleviated using a passivity-based nonlinear admittance controller of which control structure is constructed by feedback interconnection of passive subsystems. Passivity is employed to account human  and environmental interaction in the control loop. Nonlinear admittance control scheme is employed to capture model uncertainty arises from uncertain nonlinear robot dynamics. It should be noted that stability is guaranteed by passive structure of the controller without knowing the system model. Therefore, unknown mass does not impact stability.

This paper is organized as follows. In Section \ref{sec:problem_statement_and_related}, the problem formulation is stated and related works are reviewed. After introducing the standard admittance controller in Section \ref{sec:standard_adm}, the proposed control scheme is presented in Section \ref{sec:passivity_based_admittance_controller} with theoretical justification in Section \ref{sec:theory}. In Section \ref{sec:exp_1dof}, the proposed method is verified using a 1 Degree of Freedom (DoF) testbed. In Section \ref{sec:powered}, the experimental results with the powered upper-limb exoskeleton are presented. Section \ref{sec:conclusion} presents the conclusions of this study. This study is the extension of the works presented in \cite{kim2016powered, kim2016passivity}. Compared to them, this paper presents a more complete form of the control structure which is more appealing in the passivity point of view, and introduces additional theorem (Theorem \ref{thm:passivity} in this paper) that states input-output (I/O) stability.

\section{Problem Statement and Related Work}
\label{sec:problem_statement_and_related}

\subsection{Problem Statement (controller specification)}

When the ideal friction-free robot is supporting a light payload, joint damping plus gravity compensation control can be used to allow the human operator to easily operate the robot. Assuming passive environment, as far as a human operator injects the finite energy, the controlled robot will dissipate the energy obtained from the human operator.

In practice, however, to achieve high torque capability, the robot is actuated with electric motors with high-ratio reduction drives (354:1 for the first and third joints, and 600:1 for the second joint; see Fig. \ref{fig:hardware}b).\footnote{
	In fact, the robot model is almost linear w.r.t. joint angles due to the high reduction ratio. However, this paper employs nonlinear techniques as the task space dynamics (which is of our final interest) is still highly nonlinear \cite{khatib1987unified}, as will be presented in Section \ref{sec:task_space_control}. 
} The reduction drives not only amplify reflected inertia, but also introduce significant joint frictions (in our setup, greater than 15 $\mathrm{Nm}$ is required in each joint to overcome the stiction). Consequently, it becomes difficult to operate the robot for human operators. The situation becomes worse when a heavy payload is being supported. The presence of the payload further increases the inertia (or mechanical impedance) of the robot. In addition, although the gravity compensation for the payload weight is essential, it is infeasible because the payload is unknown. Moreover, because of the computational, communication, and, most importantly, cost burdens (including potential maintenance cost), we limited the sensors used: in addition to the joint encoders, F/T sensors at the handles are available to measure the user interaction force. Due to the lack of sensory information, interaction with the environment (which occurs when attaching/detaching the payload) cannot be detected. 

To summarize, the control should satisfy the following requirements:
\begin{itemize}
\item{stability is maintained in the presence of human operator;}
\item{natural interaction is maintained during the unknown interactions (due to the sensorless setup) with a passive environment;}
\item{the actual robot model is not required while overcoming the effects of high-ratio reduction drives and unknown payloads.}
\end{itemize}

To match all these requirements, this paper proposes a passivity-based nonlinear admittance controller that makes the actual robot behave like a user-defined nominal model. Because the nominal model can be chosen freely, ideally speaking, we can make the system to have low inertia, and to be free of joint friction and gravity. Consequently, by setting zero stiffness for the admittance filter of the admittance controller, the robot can be easily operated by human input. Human operator and environment are incorporated in the control loop using the passivity theory.

\subsection{Related Work}
\label{sec:related_work}

The control problem can be simplified if rich sensory information is available. For example,  \cite{aghili2009impedance} proposed a robust impedance control scheme for carrying heavy payloads using wrist F/T sensors. If joint torque sensors are installed at every joint, dynamics caused by payload can be estimated using, for example, residual-based estimators \cite{de2003actuator, kim2015design, kim2015disturbance}. However, the use of additional force sensors increases the cost and reduces the robustness of the system in practice. For this reason, our only compromise was F/T sensors at the handles to read the the user interaction force. 

For the sake of completeness, we remark that there is another branch of works that try to extract human motion intention using bio sensors \cite{hochberg2012reach}, motion capture system \cite{lee2006fpga}, and customized force sensors \cite{huang2015control}. However, this paper does not try to combine these approaches to focus only on the control design. We also remark that there have been efforts to extract not only the human interaction force but also the environmental interaction force from a single F/T sensor at the handle \cite{park2012development}. However, this problem requires full information of the system model. For such a reason, the environmental interaction is assumed to be unknown in this paper.

Using only the F/T sensor at the handles, the problem can be formulated as a robust control problem that makes a robot obey the F/T sensor signals. In this sense, the admittance control approach is noteworthy; it makes the actual system follow a user-defined behavior (or a user-defined model).

 The standard admittance controller, however, has two inherent limitations directly related to the present application. First, stable interaction with human operator can be guaranteed with properly chosen parameters that makes the admittance function (transfer function from force to velocity) positive real. This methodology is often called natural admittance control \cite{dohring2003passivity}. However, because our robot is governed by the nonlinear equation of motion, transfer function analysis is not allowed. Second, the natural interaction behavior can be realized only on a known (i.e., predefined) interaction port at which force sensor is located. Unfortunately, the robot tends to stick to the environment if the interaction is not predefined. This phenomenon is called wall sticking effect \cite{kim2016passivity}. Again, note that the environmental interaction is not known to the robot in our scenario. \cite{kikuuwe2014sliding} pointed out a similar limitation, and proposed a position controller for the admittance control scheme that has an intermediate object to which robot converges with a predefined bounded trajectory. However, because the admittance filter of our setup has zero stiffness as addressed earlier (recall that we want the robot to move only in response to the human input), the trajectory of the admittance controller can grow unbounded when interacting with unknown environment.

To overcome the limitations, passivity-based design that can account for a human and environmental interactions in the control loop is employed in this paper. Passivity-based approaches are frequently used in analyzing human-machine-environment coupled systems (e.g., \cite{losey2016time, alghooneh2016passive, kim2018passive}) owing to a property that the feedback interconnection of passive subsystems preserves passivity. Several design methodologies based on the passivity property are proposed for rehabilitation robots \cite{ zhang2015passivity, hsieh2017design}. However, due to the lack of the model shaping (MS) property, they are not appropriate for the problem of interest. In this paper, a controller is regarded as an admittance controller if it satisfies the MS property defined as  follows.
	
\begin{deff}[MS property]
	\label{def:MS_property}
	If the closed-loop dynamics can be shaped into the user-defined model as the control gain goes to infinity, it is said that the controller has MS property. 
\end{deff}

This paper proposes a passivity-based nonlinear admittance controller with the following three properties. First, passive control structure reduces the wall-sticking effect under the unknown interaction between robot end-effector and environment. Second, the MS property can be shown using the two time scale analysis. Third, the performance limit when the infinite control gain is not achievable is analyzed, inspired by \cite{kim2015bringing} in which the performance analysis of nonlinear $\mathcal{H}_\infty$ optimal controllers was performed.

\begin{figure}
	\centering
	{\includegraphics[scale=0.45]{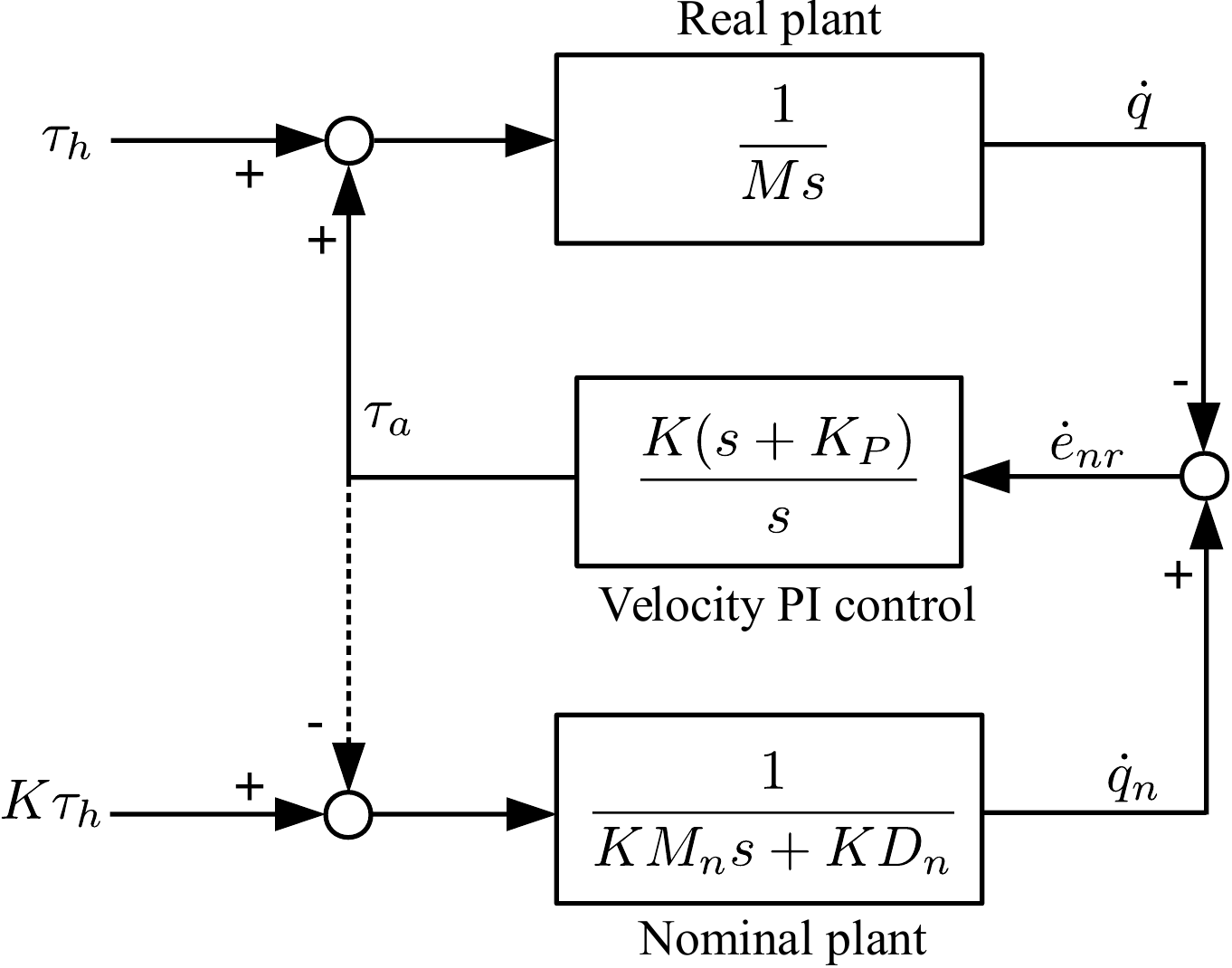}}
	\caption{Without dashed line: standard admittance controller. With dashed line: admittance control with passive structure. Note that the dashed line generates a bi-directional energy flow between the real and nominal plants.}
	\label{fig:adm_ctr_1dof}
\end{figure}

\section{Standard Admittance Controller}
\label{sec:standard_adm}

The structure of a standard admittance controller is described in Fig. \ref{fig:adm_ctr_1dof} (without dashed line). The admittance control aims at realizing desired behavior at a pre-defined interaction port which is usually the end-effector of a robot. Admittance controller renders the system dynamics into the user-defined nominal one $1/(M_n s+D_n)$, where $M_n$, $D_n$ represent the nominal mass and damping, respectively. Note that the nominal dynamics is defined in such a way that the mass can be manipulated by human operator (i.e., spring behavior is not included), which is the main problem tackling in this paper.

In Fig. \ref{fig:adm_ctr_1dof}, the nominal trajectory $\dot{q}_n$ is generated according to the user-defined nominal dynamics, and the system trajectory follows it by the admittance control input $\tau_a$. This paper considers a velocity PI controller\footnote{
	When implementing a classical admittance control, sometimes, $\dot{q}_n$/$q_n$ instead of $\tau_a$ is commanded using velocity/position servo. If the robot is not torque controlled, $\tau_a$ should be measured/estimated to realize the dashed line in Fig. \ref{fig:adm_ctr_1dof}.
}: 
\begin{align}
\tau_a = K (\dot{e}_{nr} + K_P e_{nr}),
\label{eq:adm_ctrl_1dof}
\end{align}
where $e_{nr}=q_n - q$, and $K_P, K>0$ are scalar control gains. The subscript `$n$' stands for `nominal' in the sense that the nominal (or desired) behavior is achieved if $q$ follows $q_n$. Here, notice that $e_{nr}(0)=0$ is implicitly assumed as $q_n$ can be always initialized to be identical to $q$.

In Fig. \ref{fig:adm_ctr_1dof}, the transfer function from $\tau_h$ to $\dot{q}$ is given by\footnote{$T_{(\cdot)}$ and $Q$ represent Laplace transform of $\tau_{(\cdot)}$ and $q$, respectively.}
\begin{align}
\frac{\dot{Q}}{T_H}=\frac{N(s) s + K s + K K_P}{M N(s) s^2 +  K N(s) s +  K K_P N(s)}
\label{eq:admittance_standard},
\end{align} 
where $N(s)=M_ns + D_n$. One interesting observation is that (\ref{eq:admittance_standard}) becomes 
\begin{align}
\frac{\dot{Q}}{T_H}= \frac{1}{N(s)}= \frac{1}{M_n s + D_n},
\end{align} 
as the control gain $K$ goes to infinity. This observation indicates that the MS property (Definition \ref{def:MS_property}) is satisfied, and therefore (\ref{eq:adm_ctrl_1dof}) is an admittance controller.

However, the standard admittance control achieves the user-defined dynamics only at the pre-defined admittance port. If an interaction occurs at undefined port, the admittance controller may not work properly, as shown in the following example.

{\it\underline{Motivating example}:}

Assume that the mass is interacting with an unknown wall as shown in Fig. \ref{fig:limitation_unknown_wall}. Although a human operator is applying force $\tau_h$, the mass cannot move forward due to the unknown wall ($\tau_{e}$ is also unknown to the controller). However, $q_n$ increases during the contact, meaning that the nominal trajectory drifts away from the real position. When the user applies force to the opposite direction, the mass cannot follow this intention because the controller should wait until $q_n$ to come back. This phenomenon is called wall-sticking effect in \cite{kim2016passivity}, and this effect hinders the system from  interacting with the unknown environment naturally.
\QEDA

\begin{figure}
	\centering
	\includegraphics[scale=0.38]{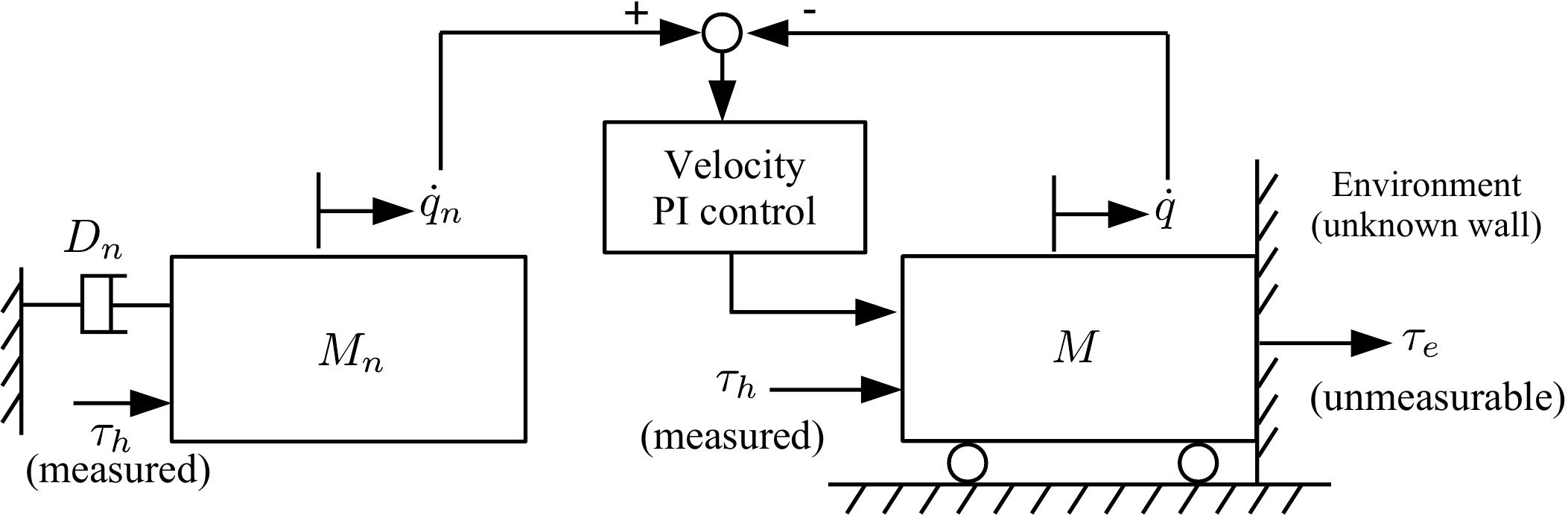}
	\caption{Standard admittance controller is applied under the unknown environmental interaction with an unexpected wall. Although the real mass cannot move ($\dot{q}=0$), the nominal velocity $\dot{q}_n$ (and also $q_n$) is keep increasing. This is, physically speaking, due to the lack of passivity in the control structure; there is no bi-directional energy flow between real and nominal masses.}
	\label{fig:limitation_unknown_wall}
\end{figure}

\section{Passivity-based Admittance Controller}
\label{sec:passivity_based_admittance_controller}

This section presents the admittance controller for the nonlinear robotic systems with passivity property. Since the passivity theory is the main tool in this paper, it is briefly reviewed in Section \ref{sec:preliminary}. After extending the standard admittance controller in the linear domain with passivity (Section \ref{sec:passive_adm_lin}), the nonlinear extension is presented (Section \ref{sec:passive_adm_proposed}).

\subsection{Preliminaries}
\label{sec:preliminary}

Consider a nonlinear system 
\begin{align}
\dot{\bx} = \bm{f}(\bx) + \bm{g}(\bx) \bm{u}, \;\; \by = \bm{h}(\bm{x}).
\end{align}
The I/O pair ($\bm{u} , \by$) is said to be output strictly passive if
\begin{align}
\int \bm{u}^T \by \mathrm{d}t \geq \alpha \int \by^T \by \mathrm{d}t
\label{eq:passivity}
\end{align}
is satisfied with some constant $\alpha >0$ (passive when $\alpha=0$). Output strictly passivity physically implies that the energy of system can only be dissipated.  Consequently, as long as the input $\bm{u}$ has the finite energy (i.e., $\bm{u} \in \mathcal{L}_2$),  the system will dissipate the injected energy eventually. Mathematically, this observation is called finite gain $\mathcal{L}_2$ stable:
\begin{align}
\int_0^T ||\bm{y}||_2^2 \mathrm{d}t \leq \gamma \int_0^T ||\bm{u}||_2^2 \mathrm{d}t + \delta,
\end{align}
where $\gamma,\delta>0$ are some constants. More rigorous mathematical treatment can be found in \cite{khalil2002nonlinear}.

For linear systems, it is relatively easy to investigate passivity of the system using transfer function analysis.\footnote{
	The I/O pair is passive if and only if the associated transfer function is so-called positive real.
} However, since this analysis is not applicable to nonlinear systems, this paper relies on the fact that the feedback interconnection of passive systems preserves passivity.

We remark that the I/O pair $(\dot{\be}_{nr}, \btau_a)$ of the velocity PI control (\ref{eq:adm_ctrl_1dof}) is output strictly passive because
\begin{align}
\nonumber \int \btau_a^T \dot{\be}_{nr} =& \int ( K K_p \be_{nr} + K \dot{\be}_{nr})^T \dot{\be}_{nr} \\
\geq& K \int \dot{\be}_{nr}^T  \dot{\be}_{nr}.
\label{eq:passivity_PI}
\end{align}
Here,  $\be_{nr}=0$ is assumed implicitly as addressed earlier. It is important to notice that the velocity PI control is equivalent to the position PD controller. However, the former one is used throughout the paper because the passivity theory utilizes the relation between velocity and force/torque. Another advantage of the passivity is that it provides clear physical understanding when applied to the mechanical system; the velocity PI control renders virtual spring-damper elements.

\subsection{Passivity-based admittance control for linear systems}
\label{sec:passive_adm_lin}

To alleviate the wall sticking effect, the standard admittance controller is extended using the passivity-based design. Based on the fact that the velocity PI control is a passive controller (recall (\ref{eq:passivity_PI})), it should be fed back to the nominal plant as well as the real plant to preserve passivity. Noting that the real and nominal plants are passive systems, the overall control structure is then constructed by feedback interconnections of passive subsystems, as shown in Fig. \ref{fig:adm_ctr_1dof} with dashed line.

Although the passivity can be shown from the structural point of view, it is not clear if this controller is an admittance controller or not. To investigate this, let us show the MS property by investigating the admittance transfer function:
\begin{align}
\nonumber &\frac{\dot{Q}}{T_H}=\\
& \frac{ N(s) s + (K+D_n)s + K K_P + K_P}{M N(s) s^2 + Ms^2+K N(s) s + M K_Ps + K K_P N(s)}.
\label{eq:admittance_proposed}
\end{align}
As $K \rightarrow \infty$, (\ref{eq:admittance_proposed}) becomes $1/N(s)$. Therefore, the MS property is satisfied, and this control scheme is an admittance controller.

Physically, noting that the velocity PI control is applied to both real and nominal plants with opposite signs, it allows bi-directional energy flow by spring-damper elements (Fig. \ref{fig:proposed_adm_ctr_concept}a). As $K \rightarrow \infty$, the spring-damper becomes rigid, so the controlled system behaves like Fig. \ref{fig:proposed_adm_ctr_concept}b. Consequently, the user-defined dynamics is achieved as the gain $K$ increases.

{\it\underline{Revisiting the motivating example}:}

Under the interaction with an unknown wall, the controlled mass can be described by Fig. \ref{fig:proposed_adm_ctr_with_wall}. $\dot{q}_n$ (and $q_n$) does not keep increasing because of the spring element. Mathematically speaking, when $\dot{q}=0$, $\dot{q}_n$ is governed by $M_n \ddot{q}_n +  (D_n+1) \dot{q}_n +  K_P (q_n - q_{e})= \tau_h$, 
where $q_{e}$ means the position at which the real mass contacts the environment (unknown wall). Hence, $\dot{q}_n$ converges to zero and $q_n$ will converge to a certain equilibrium point because of the spring $K_P$. Therefore, the proposed controller reduces the wall-sticking effect. \QEDA

\begin{figure}
	\centering
	\subfigure[]
	{\includegraphics[scale=0.32]{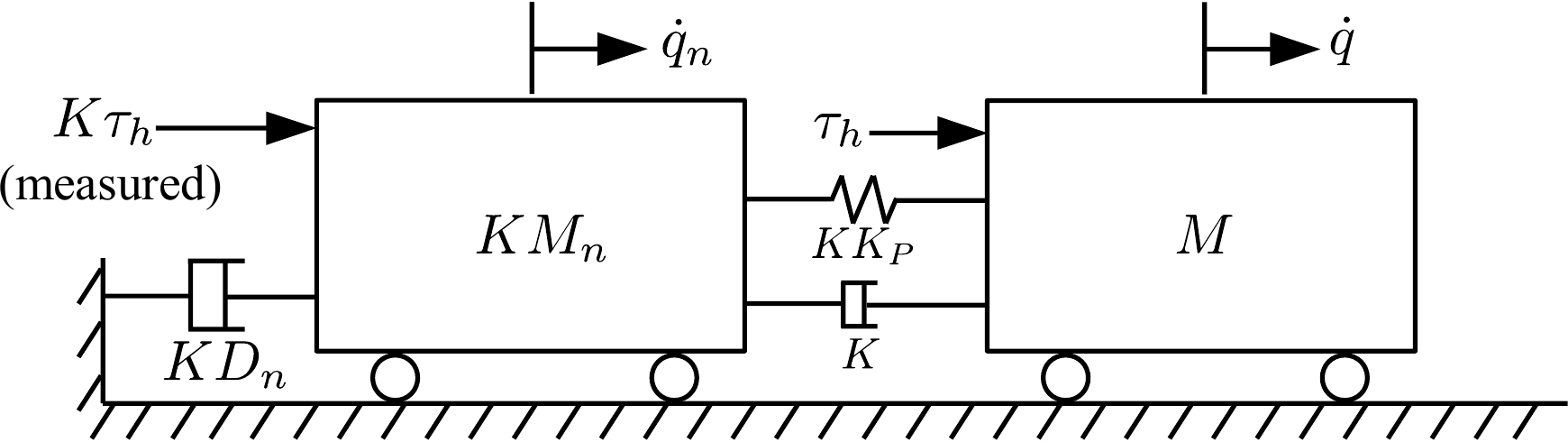}}
	\centering
	\subfigure[]
	{\includegraphics[scale=0.32]{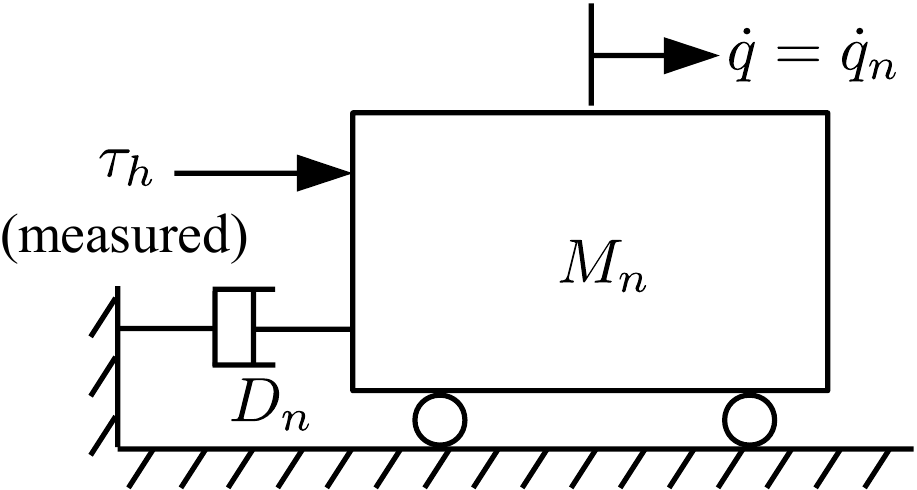}}
	\caption{(a) Conceptual diagram of the proposed passivity-based admittance controller. Recall that the velocity PI control renders virtual spring-damper elements. (b) As $K$ goes to infinity, spring-damper act as rigid connection, and thereby the controlled system behaves like the nominal plant.}
	\label{fig:proposed_adm_ctr_concept}
\end{figure}

\begin{figure}
	\centering
	\includegraphics[scale=0.37]{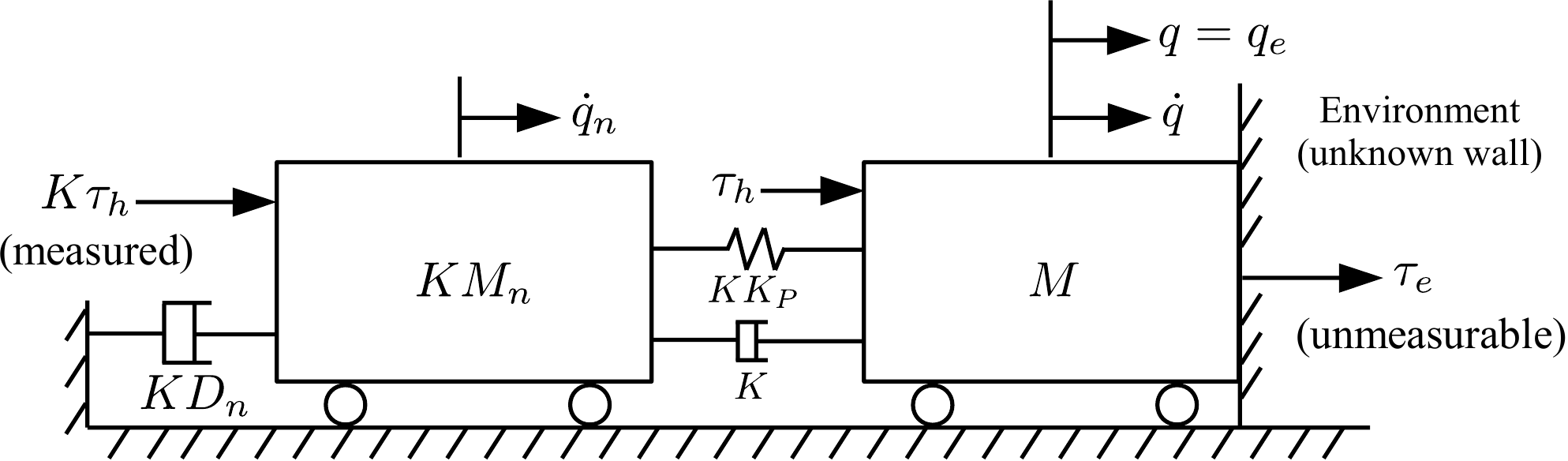}
	\caption{Proposed passivity-based admittance controller is applied under the unknown environmental interaction with an unexpected wall. Unlike the standard admittance control case, $\dot{q}_n$ as well as $\dot{q}$ does not keep increasing due to the feedback force of the spring element.}
	\label{fig:proposed_adm_ctr_with_wall}
\end{figure}

\subsection{Passivity-based admittance control for nonlinear systems}
\label{sec:passive_adm_proposed}

Although the passive admittance control is shown in the previous section, it is not applicable to nonlinear systems which cannot be represented by transfer functions. This section shows the nonlinear extension (Fig. \ref{fig:control_str}). The robot dynamics can be expressed as
\begin{align}
\bM(\bq)\ddot{\bq} + \bC(\bq,\dot{\bq})\dot{\bq} = \btau_a + \btau_h + \underbrace{\btau_f + \btau_e - \bg(\bq)}_{=\btau_d}
\label{eq:real_dyn}
\end{align}
using commonly used $\bM,\bC,\bg$, and $\bq$. Here, $\bC$ can be always selected such that $\dot{\bM}-2\bC$ is skew-symmetric. Note also that the actual $\bM$, $\bC$, $\bg$ are unknown and even may vary significantly because of the unknown payload. $\btau_a$ is the admittance input to be defined, $\btau_h$ is the interaction torque caused by the human operator, $\btau_f$ is the friction torque, and $\btau_e$ is the torque caused by environmental interactions. $\btau_f$, $\btau_e$, and $\bg$ are collectively represented by $\btau_d$ in the following.

In addition to the real robot dynamics, define the desired dynamics to achieve by
\begin{align}
\bM_n(\bq) \ddot{\bq} + (\bC_n(\bq,\dot{\bq}) + \bD_n) \dot{\bq} = \btau_h,
\label{eq:nom_dyn}
\end{align}
where $\dot{\bM}_n-2\bC_n$ is skew-symmetric, and $\bD_n$ is a damping matrix. Note again that the desired dynamics (\ref{eq:nom_dyn}) is defined in such a way that the robot can be  manipulated by human operator.  The purpose of the admittance control is to make the real dynamics (\ref{eq:real_dyn}) behave like (\ref{eq:nom_dyn}). To this end, the nominal dynamics in Fig. \ref{fig:control_str} is defined by
\begin{align}
\bM_n(\bq) \ddot{\bq}_n + \left( \bC_n(\bq,\dot{\bq}) + \bD_n \right) \dot{\bq}_n= \btau_h - \bK^{-1}\btau_a.
\label{eq:obs_dyn}
\end{align}

As shown in Fig. \ref{fig:control_str}, similar to (\ref{eq:adm_ctrl_1dof}), the admittance control input $\btau_a$ is given by
\begin{align}
\btau_a = \bK(\dot{\be}_{nr} + \bK_P \be_{nr}),
\label{eq:dob_input}
\end{align}
where $\be_{nr} \triangleq \bq_n - \bq$ is a vector representing the difference between nominal and real positions, and $\bK_P, \bK>0$ are diagonal gain matrices. 

Three main results are introduced in the following. First, let us begin with I/O stability.

\begin{thm}[$\mathcal{L}_2$ stability]
	Assume that the environment defines a passive map ($\dot{q}$ $\rightarrow$ $-\tau_e$). If no energy is stored in the system initially, the following $\mathcal{L}_2$ stability is satisfied:
	\begin{align}
	\int ||\bm{v}||_2^2 \leq \gamma \int ||\btau_h||_2^2,
	\label{eq:l2_stable}
	\end{align}
	where $\bm{v}=[\dot{\bq}^T \; \dot{\bq}_n^T]^T$, and $\gamma>0$ is a $\mathcal{L}_2$-gain.
	\label{thm:passivity}
\end{thm}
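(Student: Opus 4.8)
The plan is to present the closed loop of Fig.~\ref{fig:control_str} as the feedback interconnection of three output-strictly-passive subsystems --- the real plant (\ref{eq:real_dyn}), the nominal plant (\ref{eq:obs_dyn}), and the velocity-PI element (\ref{eq:dob_input}), the last of which was already shown output strictly passive in (\ref{eq:passivity_PI}) --- together with the environment, which is passive by hypothesis, and then to certify (\ref{eq:l2_stable}) with a single storage function. The candidate I would use is
\begin{align}
V=\tfrac12\dot{\bq}^T\bM(\bq)\dot{\bq}+\tfrac12\dot{\bq}_n^T\bK\bM_n(\bq)\dot{\bq}_n+\tfrac12\be_{nr}^T\bK\bK_P\be_{nr}+U_g(\bq)+V_e,
\end{align}
where $U_g$ is the (nonnegative, after a shift) gravitational potential of $\bg(\bq)$ and $V_e\ge0$ is the storage function of the passive map $\dot{q}\to-\tau_e$. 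The one nonobvious choice is the gain weighting $\bK\bM_n$ of the nominal kinetic energy: it is exactly what reconciles the asymmetric way in which $\btau_a$ enters the two plants ($+\btau_a$ into the real plant, but $-\bK^{-1}\btau_a$ into the nominal one). Under the hypothesis that no energy is stored initially --- which I read as $\dot{\bq}(0)=\dot{\bq}_n(0)=\bzero$, $\be_{nr}(0)=\bzero$ (the standing assumption), $V_e(0)=0$, and $U_g(\bq(0))=0$ --- we have $V(0)=0$ and $V\ge0$ along trajectories.

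Second, I would differentiate $V$ along the closed loop. Using the skew-symmetry of $\dot{\bM}-2\bC$ and of $\dot{\bM}_n-2\bC_n$, the Coriolis terms drop from the two kinetic-energy rates, which become $\dot{\bq}^T(\btau_a+\btau_h+\btau_d)$ and $\dot{\bq}_n^T\bK(\btau_h-\bK^{-1}\btau_a-\bD_n\dot{\bq}_n)$. The PI contributions then combine as $\dot{\bq}^T\btau_a-\dot{\bq}_n^T\btau_a=-\dot{\be}_{nr}^T\btau_a=-\dot{\be}_{nr}^T\bK\dot{\be}_{nr}-\dot{\be}_{nr}^T\bK\bK_P\be_{nr}$, and the last term cancels exactly against $\tfrac{d}{dt}\big(\tfrac12\be_{nr}^T\bK\bK_P\be_{nr}\big)$, leaving $-\dot{\be}_{nr}^T\bK\dot{\be}_{nr}$. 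Inside $\btau_d=\btau_f+\btau_e-\bg$, the gravity torque cancels $\tfrac{d}{dt}U_g=\dot{\bq}^T\bg$; friction gives $\dot{\bq}^T\btau_f\le0$; and the environment torque combines with $\dot{V}_e\le-\dot{\bq}^T\btau_e$ into a nonpositive quantity by passivity of the environment. What survives is the dissipation inequality
\begin{align}
\dot{V}\le(\dot{\bq}+\bK\dot{\bq}_n)^T\btau_h-\lambda_{\min}(\bK)\,\norm{\dot{\be}_{nr}}^2-\lambda_{\min}(\bK\bD_n)\,\norm{\dot{\bq}_n}^2.
\end{align}

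Third, I would convert this into a statement about $\bm{v}=[\dot{\bq}^T\;\dot{\bq}_n^T]^T$. The crucial observation --- and the very reason the PI force must also be fed back to the nominal plant (the dashed line in Fig.~\ref{fig:control_str}) --- is that, although no term $-c\,\norm{\dot{\bq}}^2$ is produced directly by the unknown (and merely dissipative) joint friction, one is manufactured from the PI coupling together with the nominal damping: since $\dot{\be}_{nr}=\dot{\bq}_n-\dot{\bq}$ and $\norm{\dot{\bq}}^2\le2\norm{\dot{\be}_{nr}}^2+2\norm{\dot{\bq}_n}^2$, there is $c_0>0$ with $-\lambda_{\min}(\bK)\norm{\dot{\be}_{nr}}^2-\lambda_{\min}(\bK\bD_n)\norm{\dot{\bq}_n}^2\le-c_0\norm{\bm{v}}^2$. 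Bounding the input term by Young's inequality, $(\dot{\bq}+\bK\dot{\bq}_n)^T\btau_h\le\tfrac{c_0}{2}\norm{\bm{v}}^2+c_1\norm{\btau_h}^2$, gives $\dot{V}\le-\tfrac{c_0}{2}\norm{\bm{v}}^2+c_1\norm{\btau_h}^2$; integrating over $[0,T]$ and using $V(T)\ge0=V(0)$ yields (\ref{eq:l2_stable}) with $\gamma=2c_1/c_0$.

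I expect the main obstacle to be the bookkeeping of the gain $\bK$: the PI passivity estimate (\ref{eq:passivity_PI}) is naturally written in $\btau_a$, whereas the nominal plant sees $\bK^{-1}\btau_a$, and the $\bK$-weighted storage above is what aligns the two. For a scalar gain $\bK=k\bI$ this weighting differentiates cleanly and all positivity ($V\ge0$, $\bK\bD_n\succ0$) is immediate; for a general diagonal $\bK$ one must additionally choose $\bM_n$ and $\bD_n$ compatibly (e.g. a configuration-independent $\bM_n$, so that the residual term $\dot{\bq}_n^T\bK(\tfrac12\dot{\bM}_n-\bC_n)\dot{\bq}_n$ vanishes). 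A minor, cosmetic point: the bias-free form of (\ref{eq:l2_stable}) needs the gravitational potential to be folded into ``no energy stored initially'', since $U_g$ is in general only bounded below; without that reading one still obtains $\mathcal{L}_2$-stability in the usual sense with an additive constant.
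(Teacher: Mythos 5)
Your proposal is correct and, at the level of strategy, matches the paper: both decompose the closed loop of Fig.~\ref{fig:control_str} into the real plant, the nominal plant, the velocity-PI block, and the passive environment, and conclude $\mathcal{L}_2$ stability from passivity of the interconnection. The difference is that the paper stops at the structural observation and cites the textbook interconnection result (Lemma 6.8 / Theorem 6.2 of Khalil), whereas you exhibit the composite storage function and push the dissipation inequality through to an explicit gain $\gamma$. That extra work buys two things the paper's two-line argument leaves implicit. First, the $\bK$-weighting of the nominal kinetic and potential terms is genuinely needed to reconcile the asymmetric injection of $\btau_a$ ($+\btau_a$ into the real plant versus $-\bK^{-1}\btau_a$ into the nominal one); the paper never says how the subsystem storage functions are to be combined. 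Second, and more substantively, skew-symmetry of $\dot{\bM}-2\bC$ only makes the real robot with output $\dot{\bq}$ passive, not output \emph{strictly} passive (there is no intrinsic $-\alpha\norm{\dot{\bq}}^2$ term unless one leans on the unmodelled friction), so the interconnection lemma does not immediately yield a bound on $\int\norm{\dot{\bq}}^2$. Your step of manufacturing $-c_0\norm{\bm{v}}^2$ from the PI damping $-\dot{\be}_{nr}^T\bK\dot{\be}_{nr}$ together with the nominal damping, via $\norm{\dot{\bq}}^2\le 2\norm{\dot{\be}_{nr}}^2+2\norm{\dot{\bq}_n}^2$, is exactly the ingredient that justifies including $\dot{\bq}$ (and not only $\dot{\bq}_n$) in $\bm{v}$. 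Your caveats are apt: for a non-scalar diagonal $\bK$ the residual $\dot{\bq}_n^T\bK\bigl(\tfrac12\dot{\bM}_n-\bC_n\bigr)\dot{\bq}_n$ does not vanish in general (harmless here, since the implementation uses a constant diagonal $\bLambda_n$), friction must be taken dissipative ($\dot{\bq}^T\btau_f\le 0$, which the paper also uses tacitly), and the gravitational potential must be folded into the ``no energy stored initially'' hypothesis to obtain the bias-free form of (\ref{eq:l2_stable}).
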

\begin{proof}
	See Section \ref{sec:theory_passivity}. 
\end{proof}

\begin{rem}
	Because the robot is controlled by the human operator, we cannot define the equilibrium point of the controlled system. Namely, we cannot say anything about stability in the sense of Lyapunov. Instead, we claim input-output $\mathcal{L}_2$ stability using the dissipativity of the closed-loop system. Physically speaking, as far as the input from human operator has finite energy (mathematically, $\btau_h \in \mathcal{L}_2$), the controlled system eventually dissipates  the energy obtained from the human operator; recall also Section \ref{sec:preliminary}. \QEDA
	\label{rem:phy_int_thm1}
\end{rem}

Although this theorem shows stability, the behavior of the controlled system is not clear yet. To solve the problem addressed in Section \ref{sec:problem_statement_and_related}, the proposed control scheme should be an admittance controller. To investigate this, we can rewrite (\ref{eq:real_dyn}) as follows after a lengthy algebraic derivation:
\begin{align}
 &\bM_n\ddot{\bq} + (\bC_n+ \bD_n)\dot{\bq} = \btau_h + \bM_n \bM^{-1} ( \btau_a -  \btau_{ed}),
\label{eq:real_rearrange}
\end{align}
where
\begin{align}
\nonumber  \btau_{ed}&=	\bM \bM_n^{-1}  \times \\
  &\; \Big(\widetilde{\bM} \bM^{-1}(-\bC\dot{\bq} + \btau_h+\btau_d) + \widetilde{\bC}\dot{\bq} - \bD_n\dot{\bq} - \btau_d\Big)
\label{eq:effective_dist}
\end{align}
is the effective disturbance, and $\widetilde{(\cdot)}= (\cdot) - (\cdot)_n$.

\begin{thm}[MS property]
	Let $\bK=\frac{1}{\epsilon} \bI$, where  $ 0< \epsilon \ll 1$. As $\epsilon \rightarrow 0$, the admittance control input $\btau_a$ exactly cancels the effective disturbance $\btau_{ed}$.
	\label{thm:dist_observation}
\end{thm}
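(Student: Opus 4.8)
The plan is to treat $\epsilon = 1/K$ as a singular perturbation parameter and run a two-time-scale argument in which the admittance input $\btau_a$ itself is the fast variable. First I would form the error dynamics for $\be_{nr} = \bq_n - \bq$: subtracting (\ref{eq:real_rearrange}) from (\ref{eq:obs_dyn}), inserting the controller (\ref{eq:dob_input}), and using $\bK^{-1}\btau_a = \dot{\be}_{nr} + \bK_P\be_{nr}$ (equivalently $\dot{\be}_{nr} = -\bK_P\be_{nr} + \epsilon\btau_a$), one obtains
\begin{align}
\bM_n\ddot{\be}_{nr} + (\bC_n + \bD_n + \bI)\dot{\be}_{nr} + \bK_P\be_{nr} = -\bM_n\bM^{-1}(\btau_a - \btau_{ed}),
\label{eq:plan_err}
\end{align}
i.e. $\btau_a - \btau_{ed} = -\bM\bM_n^{-1}\big[\bM_n\ddot{\be}_{nr} + (\bC_n + \bD_n + \bI)\dot{\be}_{nr} + \bK_P\be_{nr}\big]$, so the claim amounts to showing the bracketed term vanishes as $\epsilon \to 0$. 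Since $\bK_P > 0$ and $\be_{nr}(0) = 0$, the exponentially stable equation $\dot{\be}_{nr} = -\bK_P\be_{nr} + \epsilon\btau_a$ already keeps $\be_{nr}$ and $\dot{\be}_{nr}$ of order $\epsilon$ as long as $\btau_a$ stays bounded.

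Next I would extract the fast subsystem for $\btau_a$. Differentiating $\epsilon\btau_a = \dot{\be}_{nr} + \bK_P\be_{nr}$, eliminating $\ddot{\be}_{nr}$ with (\ref{eq:plan_err}) and then $\dot{\be}_{nr}$ with $\dot{\be}_{nr} = \epsilon\btau_a - \bK_P\be_{nr}$, one is left with
\begin{align}
\epsilon\dot{\btau}_a = -\bM^{-1}(\btau_a - \btau_{ed}) + \bm{\rho},
\label{eq:plan_fast}
\end{align}
where $\bm{\rho}$ collects the $\dot{\be}_{nr}$- and $\be_{nr}$-proportional remainders and is $O(\epsilon)$ by the previous step. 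Choosing the slow coordinates to be the physical state $(\bq,\dot{\bq})$ — which, with $\btau_h$ and $\btau_d$, determines $\btau_{ed}$, and which evolve on the $O(1)$ time scale because $\btau_a = O(1)$ — the boundary-layer system on the fast time $\tau = t/\epsilon$, with the slow variables frozen, is $\mathrm{d}\btau_a/\mathrm{d}\tau = -\bM^{-1}(\btau_a - \btau_{ed})$. Because $\bM = \bM^T$ is uniformly positive definite, $-\bM^{-1}$ is Hurwitz, so this boundary layer is exponentially stable with unique equilibrium $\btau_a = \btau_{ed}$; setting $\epsilon = 0$ in (\ref{eq:plan_fast}) identifies the same slow manifold $\btau_a = \btau_{ed}$, on which the left-hand side of (\ref{eq:real_rearrange}) reduces exactly to the nominal model (\ref{eq:nom_dyn}).

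I would then close the argument with Tikhonov's theorem: its hypotheses are the exponential stability of the boundary layer just established, the boundedness/smoothness of $\bM, \bC, \bM_n, \bC_n, \btau_{ed}$, and the boundedness of the slow trajectory on the (finite) time interval considered — the last point following from the $\mathcal{L}_2$ stability of Theorem \ref{thm:passivity} together with $\btau_h$ having finite energy. Then $\btau_a$ remains $O(\epsilon)$-close to $\btau_{ed}$ after a boundary layer of width $O(\epsilon)$, so $\btau_a \to \btau_{ed}$ as $\epsilon \to 0$ (i.e. $\bK \to \infty$); equivalently the closed loop (\ref{eq:real_rearrange}) converges to the nominal model (\ref{eq:nom_dyn}), which is the MS property of Definition \ref{def:MS_property}.

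The main obstacle is the technical bookkeeping that makes the above rigorous: writing the closed loop in exact standard singular-perturbation form, and checking uniform (in the slow variables) exponential stability of the boundary layer together with enough regularity of the data that $\dot{\btau}_a$ is well defined and bounded — in particular $\bM, \bM_n$ bounded and bounded away from singularity, and $\btau_{ed}$ (hence $\btau_d = \btau_f + \btau_e - \bg$, containing joint friction and the unmeasured environment torque) sufficiently smooth. A self-contained alternative that avoids Tikhonov is a direct Lyapunov estimate on $V = \tfrac{1}{2}(\btau_a - \btau_{ed})^T\bM(\btau_a - \btau_{ed})$, showing $\epsilon\dot{V} \le -c\,\|\btau_a - \btau_{ed}\|^2 + O(\epsilon)$ and concluding $\|\btau_a - \btau_{ed}\| = O(\epsilon)$ by a comparison lemma; that estimate is where I expect most of the real work to be.
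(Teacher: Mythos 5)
Your proposal is correct and follows essentially the same route as the paper: a two-time-scale (singular perturbation) argument in which $\btau_a$ is the fast variable, the slow state $(\bq,\dot{\bq})$ is frozen on the fast time scale, and the boundary-layer dynamics $\bM\,\mathrm{d}\btau_a/\mathrm{d}\sigma + \btau_a = \btau_{ed}$ (the paper's (\ref{eq:boundary_layer}), matching your (\ref{eq:plan_fast}) with $\bm{\rho}$ dropped) drives $\btau_a$ exponentially to $\btau_{ed}$. The paper reaches the boundary layer by substituting $\dot{\be}_{nr}+\bK_P\be_{nr}=\epsilon\btau_a$ into the difference dynamics (\ref{eq:dif_dyn}) and letting $\epsilon\to 0$ rather than invoking Tikhonov's theorem explicitly, but the underlying argument is the same.
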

\begin{proof} 
	See Section \ref{sec:theory_disturbance_observation}.
\end{proof}

\begin{figure}
	\centering
	\includegraphics[scale=0.36]{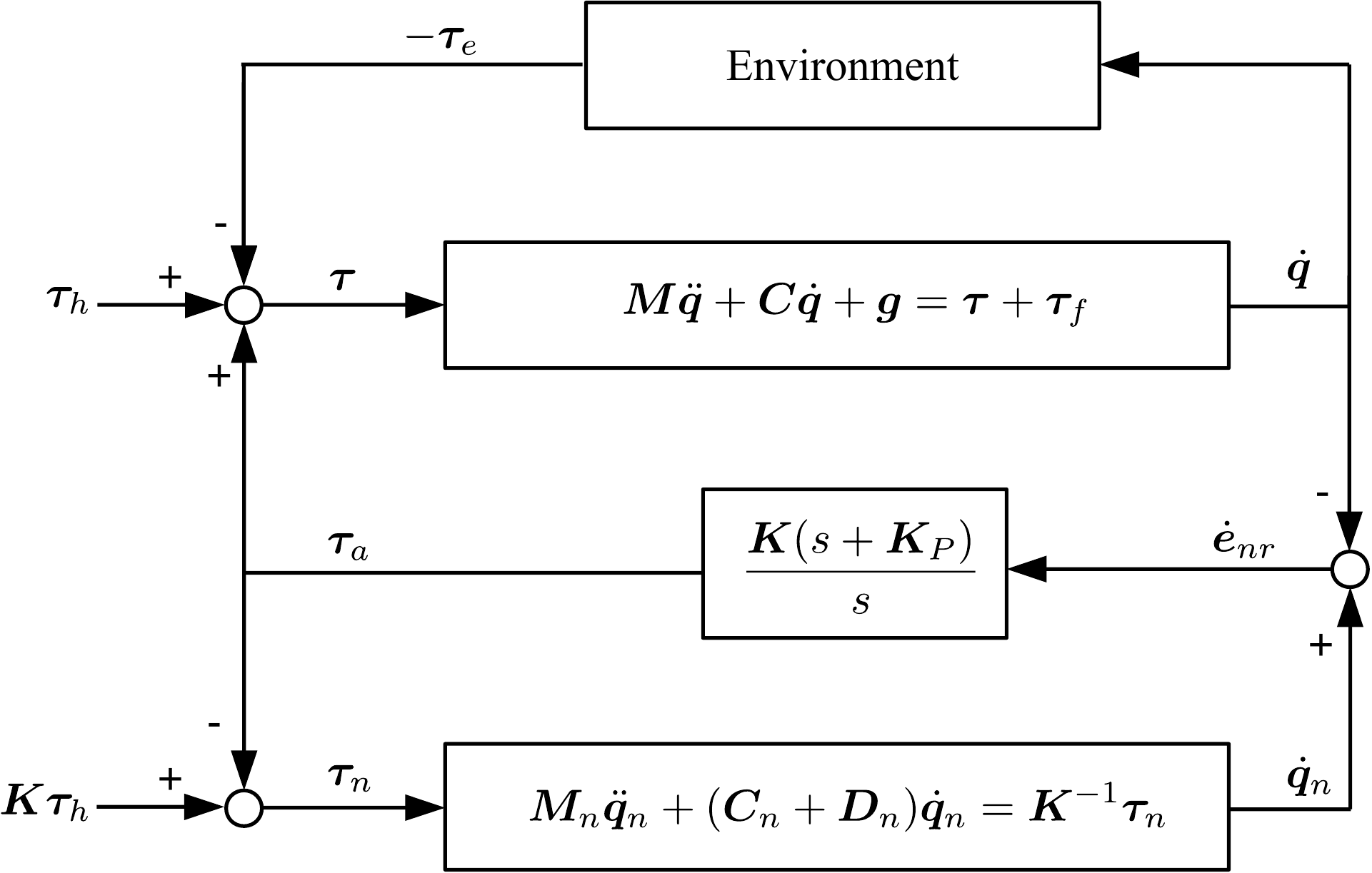}
	\caption{The structure of the proposed passivity-based nonlinear admittance controller. The closed-loop can be represented by feedback interconnections of (strictly) passive subsystems. Note that, similar to Fig. \ref{fig:adm_ctr_1dof}, the control structure is expressed using force (torque) and velocity, to show  passivity clearly.}
	\label{fig:control_str}
\end{figure}

This theorem indicates that the proposed control scheme is an admittance controller because the MS property is satisfied. However, the high gain setup (i.e., $\epsilon \rightarrow 0$) is not appealing in practice because such an arbitrarily small $\epsilon$ is not achievable for various reasons such as system bandwidth and noisy velocity measurements. Therefore, further analysis with practically achievable $\epsilon$ is desirable. The following theorem shows that the admittance controller is realized (in the sense of infinite norm) linearly with the control gain. We would like to remark that this analysis is not to show stability (which is already shown in Theorem \ref{thm:passivity}), but to predict the achievable performance limit.

\begin{thm}[Performance Limit]
	Let $\bK=\frac{1}{\epsilon}\bI$, where $0 < \epsilon \ll 1$. The achievable performance is limited as follows:
	\begin{align}
	|e_{nr}[i]|_\infty \leq ||\bx_{nr}||_\infty \propto \epsilon,
	\label{eq:pm_predic}
	\end{align}
	where $[i]$ denotes the $i$-th component of a vector, and $\bx_{nr}=[\be_{nr}^T \;\; \dot{\be}_{nr}^T]^T$.
	\label{thm:pm}
\end{thm}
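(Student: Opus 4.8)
The plan is to read the bound (\ref{eq:pm_predic}) as a boundary-layer estimate for a singularly perturbed closed loop in which the error state $\bx_{nr}=[\be_{nr}^T\;\dot{\be}_{nr}^T]^T$ is the \emph{fast} variable. With $\bK=\tfrac1\epsilon\bI$ the inner feedback $\btau_a=\tfrac1\epsilon(\dot{\be}_{nr}+\bK_P\be_{nr})$ is a high-gain loop, so after the right change of coordinates $\epsilon$ multiplies the derivative of an error coordinate whose quasi-steady-state value is itself $O(\epsilon)$; this is precisely the refinement of the two-time-scale argument behind Theorem~\ref{thm:dist_observation}, in the spirit of the performance analysis of \cite{kim2015bringing}. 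A structural point that makes the infinite-norm claim possible is that $\bq_n$ is always initialized to $\bq$ (and, consistently, $\dot{\bq}_n$ to $\dot{\bq}$), so $\bx_{nr}(0)=\bzero$: the trajectory starts on the quasi-steady-state manifold, no $O(1)$ fast transient needs to be peeled off, and the $O(\epsilon)$ estimate holds uniformly in $t$.

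Concretely, I would first subtract the rearranged real dynamics (\ref{eq:real_rearrange}) from the nominal dynamics (\ref{eq:obs_dyn}). Using $\bK^{-1}\btau_a=\dot{\be}_{nr}+\bK_P\be_{nr}\triangleq\bm{s}$ and $\btau_a=\tfrac1\epsilon\bm{s}$ this gives
\begin{align}
\bM_n\ddot{\be}_{nr}+(\bC_n+\bD_n)\dot{\be}_{nr}=-\bm{s}-\tfrac1\epsilon\bM_n\bM^{-1}\bm{s}+\bM_n\bM^{-1}\btau_{ed}.
\end{align}
Rewriting in the coordinates $(\be_{nr},\bm{s})$ and scaling the $\bm{s}$-equation by $\epsilon$ puts the loop in standard form
\begin{align}
\dot{\be}_{nr}=-\bK_P\be_{nr}+\bm{s},\qquad \epsilon\dot{\bm{s}}=-\bM^{-1}\bm{s}+\epsilon\,\bm{\Phi}(\bq,\dot{\bq},\be_{nr},\bm{s}),
\end{align}
where $\bm{\Phi}$ collects the $O(1)$ terms built from $\bM_n,\bC_n,\bD_n,\bK_P$ and from $\bM^{-1}\btau_{ed}$, the latter being $\epsilon$-free by (\ref{eq:effective_dist}). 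For the fast subsystem I would use $V=\tfrac12\bm{s}^T\bM\bm{s}$ (together with a composite term in $\be_{nr}$ to dominate the slow-fast coupling inside $\bm{\Phi}$) plus a uniform lower bound on $\bM$ and an upper bound on $\dot{\bM}$, obtaining $\dot V\le-(\tfrac1\epsilon-c_3)\|\bm{s}\|^2+c_4\|\bm{s}\|$ with $c_3,c_4$ independent of $\epsilon$; since $\bm{s}(0)=\bzero$, the comparison lemma then gives $\|\bm{s}\|_\infty\le c_1\epsilon$. Feeding this into the first equation --- a diagonal Hurwitz system driven by $\bm{s}$ with $\be_{nr}(0)=\bzero$ --- gives $\|\be_{nr}\|_\infty=O(\epsilon)$ componentwise, whence $\dot{\be}_{nr}=\bm{s}-\bK_P\be_{nr}$ is $O(\epsilon)$ as well; therefore $\|\bx_{nr}\|_\infty\propto\epsilon$ and in particular $|e_{nr}[i]|_\infty\le\|\bx_{nr}\|_\infty\propto\epsilon$, which is (\ref{eq:pm_predic}).

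The hard part is not this cascade but making the constants $c_1,\dots,c_4$ genuinely $\epsilon$-independent and valid for all $t$. Two ingredients are needed. First, $\bM,\bM_n$ must be uniformly positive definite and $\bC,\bC_n,\dot{\bM}$ bounded over the operating region, so that $-\bM^{-1}/\epsilon$ is a uniformly stable fast matrix and $\bm{\Phi}$ is uniformly bounded; these are the usual manipulator assumptions, but here $\bM,\bC,\bg$ are unknown and payload-dependent, so they must be postulated rather than derived. Second, the exogenous signals entering $\btau_{ed}$ through (\ref{eq:effective_dist}) --- $\btau_h$ and $\btau_d=\btau_f+\btau_e-\bg(\bq)$ --- together with $\dot{\bq},\dot{\bq}_n$ must remain bounded; for this I would lean on Theorem~\ref{thm:passivity} (finite-gain $\mathcal{L}_2$ stability of $\btau_h\mapsto\bm{v}$) plus the operating assumption $\btau_h,\btau_d\in\mathcal{L}_\infty$, noting that the extra term $\bK^{-1}\btau_a=\bm{s}=O(\epsilon)$ in (\ref{eq:obs_dyn}) is only a vanishing perturbation of the nominal plant, so these bounds do not degrade as $\epsilon\to0$. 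Once the boundedness is secured, the singular-perturbation estimate is routine.
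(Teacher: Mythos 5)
Your proposal is correct in substance and reaches the claimed $O(\epsilon)$ bound, but it is organized differently from the paper's proof. The paper does not keep an explicit slow/fast cascade: it writes the error dynamics in the single state $\bx_{nr}$ as $\dot{\bx}_{nr}=\bA\bx_{nr}+\bB(-\btau_a)+\bB\bw$, bounds $\|\bw\|^2$ by a first-order polynomial in $\|\bx_{nr}\|$ (this is exactly where the boundedness of $\bq,\dot{\bq},\btau_h,\btau_d$ that you flag is assumed), and then uses one composite quadratic function $V=\tfrac12\bx_{nr}^T\bP\bx_{nr}$ whose $\bP$ satisfies the Riccati-type identity (\ref{eq:riccati}) with $\bQ=\mathrm{diag}\{\bK_P^2/\epsilon,\bI/\epsilon\}$; completing the square with $\btau_a=\tfrac1\epsilon\bB^T\bP\bx_{nr}$ yields (\ref{eq:V_dot_upper}), and the ultimate bound $\|\bx_{nr}\|_\infty\le a_1\epsilon^2+a_2\epsilon$ follows by setting its right-hand side to zero. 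Your fast variable $\bm{s}=\dot{\be}_{nr}+\bK_P\be_{nr}$ is exactly $\bK^{-1}\btau_a$, and the paper's $V$ is (up to commuting $\bM$ with $\bK_P$) precisely the composite function $\tfrac12\bm{s}^T\bM\bm{s}+\tfrac1{2\epsilon}\be_{nr}^T\bK_P\be_{nr}$ that you say you would need, so the two arguments rest on the same mechanism. What your cascade buys is transparency: the high-gain damping $-\tfrac1\epsilon\bM^{-1}\bm{s}$ gives the $O(\epsilon)$ ultimate bound on $\bm{s}$, and the stable filter $\dot{\be}_{nr}=-\bK_P\be_{nr}+\bm{s}$ propagates it componentwise. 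The one place that needs care is the one you already anticipate: your forcing $\bm{\Phi}$ contains $\dot{\be}_{nr}=\bm{s}-\bK_P\be_{nr}$, so the naive two-step cascade is circular, and either the composite term or a sup-norm small-gain step (using that the slow filter has gain $1/\lambda_{\min}(\bK_P)$ from $\bm{s}$ to $\be_{nr}$) is genuinely required — this is exactly the role played by the $\tfrac1\epsilon\bK_P$ block of the paper's $\bP$. Both proofs equally postulate, rather than derive, the uniform bounds on $\bM,\bM_n,\bC,\bC_n$ and the boundedness of $\btau_h$ and $\btau_d$; the paper buries this in the sentence ``because $\bq$ and $\dot{\bq}$ are bounded'' inside its Lemma, so your explicit listing of these hypotheses is, if anything, the more honest presentation.
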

\begin{proof}
	See Section \ref{sec:theory_performance}. 
\end{proof}

To summarize, the proposed approach can be realized by implementing the following two steps in every control loop.
\begin{enumerate}
	\item Update $\dot{\bq}_n$ and $\bq_n$ by integrating
		\begin{align}
			 \ddot{\bq}_n  = \bM_n^{-1}\left( - (\bC_n + \bD_n) \dot{\bq}_n +\btau_h - \bK^{-1}\btau_a   \right),
		\end{align}
		which is equivalent to (\ref{eq:obs_dyn}). Here note that $\btau_h$ is measured and $\btau_a$ is known.
	\item Apply the control input (\ref{eq:dob_input}) to follow $\bq_n$.
\end{enumerate}

\section{Theoretical Derivation}
\label{sec:theory}

\subsection{$\mathcal{L}_2$ stability (Proof of Theorem \ref{thm:passivity})} 
\label{sec:theory_passivity}

The controller (\ref{eq:dob_input}) satisfies output strict passivity as shown in  (\ref{eq:passivity_PI}). Due to the skew-symetricity of $\dot{\bM}-2\bC$ and $\dot{\bM}_n-2\bC_n$, the actual and nominal robots are also output strictly passive. Therefore, the overall control structure preserves passivity because it can be constructed by feedback interconnections of passive subsystems. As briefly addressed in Section \ref{sec:preliminary}, Lemma 6.8 (or Theorem 6.2) in \cite{khalil2002nonlinear} states that the feedback interconnection of the strictly passive blocks result in finite gain $\mathcal{L}_2$ stability. In particular, we obtain  (\ref{eq:l2_stable}) in our case.

\subsection{MS Property (Proof of Theorem \ref{thm:dist_observation})}
\label{sec:theory_disturbance_observation}

To begin with, we define the fast time scale $\sigma = t / \epsilon$. $\sigma$ is called ``fast time scale'' because it flows $1/\epsilon$ times faster than the actual time scale $t$ which is called ``slow time scale''. 

In state-space, the robot dynamics (\ref{eq:real_dyn}) can be expressed as
\begin{align}
\frac{d}{d\sigma}
\left(
\begin{array}{c}
\bq \\
\dot{\bq}
\end{array}
\right)
=
\left(
\begin{array}{c}
\epsilon \dot{\bq} \\
\epsilon \bM^{-1} ( -\bC\dot{\bq} + \btau_a + \btau_h + \btau_d )
\end{array}
\right).
\end{align}
Hence, as $\epsilon \rightarrow 0$, $
\frac{d}{d\sigma}
\left(
\begin{array}{c}
\bq \\
\dot{\bq}
\end{array}
\right)
=
\bzero
\label{eq:slow_var}
$. This implies that $\bq$, $\dot{\bq}$ are frozen in the fast time scale $\sigma$.

Next, we define the difference dynamics, which is obtained by subtracting (\ref{eq:real_dyn}) from (\ref{eq:obs_dyn}), as follows:
\begin{align}
\nonumber &\bM (\ddot{\be}_{nr}+\bK_P\dot{\be}_{nr})\\
 & + \bM \bM_n^{-1}(\bC_n+\bD_n+\bM_n \bM^{-1} \bK) (\dot{\be}_{nr} +\bK_P \be_{nr})= \btau_{ed}.
\label{eq:dif_dyn}
\end{align}
Using $(\dot{\be}_{nr}+\bK_P \be_{nr})= \bK^{-1}\btau_a= \epsilon \btau_a$, we have
\begin{align}
\nonumber &\bM_n \frac{d}{d\sigma}\btau_a + (\epsilon(\bC_n+\bD_n)+\bM_n \bM^{-1}) \btau_a = \\
&\;\; -\btau_d + \widetilde{\bM} \bM^{-1}(-\bC\dot{\bq} +\btau_h + \btau_d) + \widetilde{\bC} \dot{\bq} - \bD_n \dot{\bq}.
\end{align}
Hence, as $\epsilon \rightarrow 0$,
\begin{align}
\bM \frac{d}{d\sigma}\btau_a + \btau_a = \btau_{ed}.
\label{eq:boundary_layer}
\end{align}
Because $\bq$ and $\dot{\bq}$ are frozen variables, the right-hand side of (\ref{eq:boundary_layer}) and $\bM^{-1}=\bM(\bq)^{-1}>0$ can be considered as constant values. Therefore, in the fast time scale $\sigma$, $\btau_a$ exponentially converges to the effective disturbance defined in (\ref{eq:effective_dist}). 

Physically speaking, because $\btau_a$ converges to the effective disturbance exponentially fast with respect to $\sigma$, $\btau_a$ can be regarded as the effective disturbance itself in the slow time scale $t$ (i.e., very fast transient).  As a result, as $\epsilon$ goes to zero, the system behaves as the desired dynamics $\bM_n\ddot{\bq}+(\bC_n+\bD_n)\dot{\bq}=\btau_h$ in the slow time scale $t$.

\subsection{Performance Limit (Proof of Theorem \ref{thm:pm})}
\label{sec:theory_performance}

To begin with, let us alternatively express (\ref{eq:dif_dyn}) as
\begin{align}
\bM(\ddot{\be}_{nr} + \bK_P \dot{\be}_{nr}) + (\bC+\bD_n)(\dot{\be}_{nr}+\bK_P \be_{nr}) = -\btau_a+\bw,
\label{eq:diff_dyn_alter}
\end{align}
where $\bw =\widetilde{\bM}\bM_n^{-1} ( -(\bC_n+\bD_n)(\dot{\bq}_n + \bK_P \be_{nr})+\btau_h ) + \widetilde{\bC}(\dot{\bq}_{n} + \bK_P \be_{nr})-\btau_d$. In state-space form, (\ref{eq:diff_dyn_alter}) can be expressed as
\begin{align}
\dot{\bx}_{nr} = \bA \bx_{nr} + \bB(-\btau_a) + \bB\bw,
\end{align}
where
\begin{align}
\nonumber \bA&=
\left[
\begin{array}{cc}
\bzero & \bI \\
-\bM^{-1}(\bC+\bD_n)\bK_P & -\bM^{-1}(\bC+\bD_n)-\bK_P
\end{array}
\right], \\
\bB&=
\left[
\begin{array}{c}
\bzero \\
\bM^{-1}
\end{array}
\right].
\end{align}
For this system, we measure the performance using the worst-case value of $|e_{nr}[i]|$.

Because the closed-loop is dissipative (Theorem \ref{thm:passivity}), the boundedness of $\bw$ is already guaranteed for bounded $\btau_h$. However, for tighter analysis, we bound $||\bw||^2$ by a polynomial of $\bx_{nr}$ as follows:
\begin{lem}
$||\bw||^2$ is bounded from above by 
\begin{align}
||\bw||^2 \leq b_1 ||\bx_{nr}||^2 + b_2 ||\bx_{nr}|| + b_3 ,
\label{eq:w_sq_boundedness}
\end{align}
for some constants $b_1,b_2,b_3>0$.
\end{lem}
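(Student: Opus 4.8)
The plan is to reduce \eqref{eq:w_sq_boundedness} to a routine triangle-inequality estimate, once the terms in $\bw$ that genuinely depend on $\bx_{nr}$ have been separated from those that are merely bounded. The standing hypothesis (consistent with the sentence preceding the lemma) is that $\btau_h$ is bounded; combined with Theorem~\ref{thm:passivity} and the output strict passivity of each subsystem, this makes the closed-loop signals $\dot{\bq}$, $\dot{\bq}_n$ and the lumped disturbance $\btau_d=\btau_f+\btau_e-\bg(\bq)$ bounded as well. Here I would invoke the standard structural facts for a revolute-joint manipulator: $\bM(\bq),\bM_n(\bq),\bg(\bq)$ are globally bounded, the Coriolis matrices $\bC(\bq,\dot{\bq}),\bC_n(\bq,\dot{\bq})$ are bounded once $\dot{\bq}$ is, $\btau_f$ is bounded for bounded $\dot{\bq}$, and $\btau_e$ is bounded because the environment is passive and driven by a bounded end-effector velocity. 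Hence the ``model-mismatch'' factors $\widetilde{\bM}\bM_n^{-1}$, $\widetilde{\bC}$, $\bM_n^{-1}$, $\bC_n+\bD_n$ all have norms dominated by fixed constants, and $||\btau_h||,||\btau_d||$ are bounded.

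First I would isolate the single $\bx_{nr}$-dependent factor, namely $\dot{\bq}_n+\bK_P\be_{nr}$, which appears twice in the definition of $\bw$. Since $||\be_{nr}||\le||\bx_{nr}||$,
\begin{align}
||\dot{\bq}_n+\bK_P\be_{nr}|| &\le \sup_t||\dot{\bq}_n(t)|| + ||\bK_P||\,||\be_{nr}|| \le a_1||\bx_{nr}|| + a_2
\end{align}
for constants $a_1,a_2>0$. Next, applying the triangle inequality directly to the definition of $\bw$,
\begin{align}
\nonumber ||\bw|| \le{}& ||\widetilde{\bM}\bM_n^{-1}||\big(||\bC_n+\bD_n||\,||\dot{\bq}_n+\bK_P\be_{nr}|| + ||\btau_h||\big) \\
& {}+ ||\widetilde{\bC}||\,||\dot{\bq}_n+\bK_P\be_{nr}|| + ||\btau_d||,
\end{align}
and substituting the bounds above yields $||\bw||\le\alpha_1||\bx_{nr}||+\alpha_2$ for some $\alpha_1,\alpha_2>0$. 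Squaring gives $||\bw||^2\le\alpha_1^2||\bx_{nr}||^2+2\alpha_1\alpha_2||\bx_{nr}||+\alpha_2^2$, which is precisely \eqref{eq:w_sq_boundedness} with $b_1=\alpha_1^2$, $b_2=2\alpha_1\alpha_2$, $b_3=\alpha_2^2$; should any of these vanish it may be replaced by an arbitrary positive constant, since that only loosens the bound.

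The main obstacle is not the algebra but the justification of the boundedness of the raw signals: Theorem~\ref{thm:passivity} delivers an $\mathcal{L}_2$, not an $\mathcal{L}_\infty$, estimate, so one must lean on the output-strict-passivity/dissipativity structure of the individual blocks (or adopt it as a standing assumption, as the surrounding text does) to conclude that $\dot{\bq}$, $\dot{\bq}_n$, and in particular the unmeasured environmental torque $\btau_e$ remain bounded; the last point additionally requires the mild physical assumption that a passive environment returns bounded reaction torques under bounded end-effector velocity. Everything downstream of that—the affine bound on $\dot{\bq}_n+\bK_P\be_{nr}$, the triangle-inequality chain, and the final squaring—is routine.
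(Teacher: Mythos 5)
Your argument is correct and follows essentially the same route as the paper: bound the model-mismatch factors and exogenous torques by constants, obtain an affine bound $\|\bw\|\le\alpha_1\|\bx_{nr}\|+\alpha_2$, and square. The one difference is that you treat $\sup_t\|\dot{\bq}_n(t)\|$ as a given constant, whereas the paper first substitutes $\dot{\bq}_n=\dot{\be}_{nr}+\dot{\bq}$ so that only the \emph{real} robot state $(\bq,\dot{\bq})$ needs to be assumed bounded and all $\bq_n,\dot{\bq}_n$ dependence is absorbed into $\bx_{nr}$; this avoids presupposing boundedness of $\dot{\be}_{nr}$ (a component of the very state the subsequent Lyapunov analysis is meant to bound), so you may want to adopt that substitution rather than the a priori bound on $\dot{\bq}_n$.
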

\begin{proof}
$\bw$ is expressed using not only $\bq, \dot{\bq}, \be_{nr}$, and $\dot{\be}_{nr}$, but also $\bq_n$ and $\dot{\bq}_n$. To eliminate redundancy, we use the relation $\bq_n=\be_{nr}+\bq,\dot{\bq}_n=\dot{\be}_{nr}+\dot{\bq}$ so that $\bw$ can be expressed as $\bw=\widetilde{\bM}\bM_n^{-1}(-(\bC_n+\bD_n)(\dot{\be}_{nr} + \bK_P \be_{nr}+\dot{\bq})+\btau_h) + \widetilde{\bC}(\dot{\be}_{nr} + \bK_P \be_{nr}+\dot{\bq})$. Because $\bq$ and $\dot{\bq}$ are bounded, $\bC_n$ and $\widetilde{\bC}$ are also bounded. Therefore, $||\bw||$ can be bounded by a first-order polynomial of $||\bx_{nr}||$, and as a result, (\ref{eq:w_sq_boundedness}) follows immediately.
\end{proof}

Next, we define a Lyapunov-like function $V=\frac{1}{2}\bx_{nr}^T \bP \bx_{nr}$, where
\begin{align}
\bP=
\left[
\begin{array}{cc}
\bM \bK_P^2 + \frac{1}{\epsilon} \bK_P & \bK_P \bM \\
\bM \bK_P & \bM
\end{array}
\right].
\end{align}
In addition, define a matrix $\bQ>0$ as $\bQ=diag\left\{  \bK_P^2/\epsilon,  \bI/\epsilon  \right\}$.
Then, the following equation is satisfied:
\begin{align}
\dot{\bP} + \bA^T\bP + \bP\bA - \frac{1}{\epsilon} \bP\bB \bB^T\bP + \bQ = \bzero.
\label{eq:riccati}
\end{align}

The time derivative of $V$ is $\dot{V} = \frac{1}{2}\bx_{nr}^T (\dot{\bP} + \bA^T\bP + \bP\bA)\bx_{nr} - \bx_{nr}^T \bP\bB \btau_a + \bx_{nr}^T \bP\bB \bw$.
Using (\ref{eq:w_sq_boundedness}), (\ref{eq:riccati}), and $\btau_a=\bK(\dot{\be}_{nr}+\bK_P \be_{nr})=\frac{1}{\epsilon}\bB^T \bP \bx_{nr}$, 
\begin{align}
\nonumber \dot{V} &= -\frac{1}{2} \bx_{nr}^T \bQ \bx_{nr} - \frac{1}{2\epsilon}||\bB^T \bP \bx_{nr} - \epsilon \bw ||^2 + \frac{\epsilon}{2} ||\bw||^2 \\
 &\leq -\frac{1}{2} (\bQ-\epsilon b_1 \bI) ||\bx_{nr}||^2 + \frac{\epsilon}{2} b_2 ||\bx_{nr}|| + \frac{\epsilon}{2}  b_3.
\end{align}
If $\bK_P>\bI$, then the minimum eigenvalue of $\bQ$ is $\frac{1}{\epsilon}$.\footnote{The case $\bK_P \ngtr \bI$ case can be analyzed similarly}  Thus,
\begin{align}
\dot{V} \leq -\frac{1}{2} \lambda_\epsilon ||\bx_{nr}||^2 + \frac{\epsilon}{2} b_2 ||\bx_{nr}|| + \frac{\epsilon}{2}  b_3,
\label{eq:V_dot_upper}
\end{align}
where $\lambda_{\epsilon}=(\frac{1}{\epsilon}-\epsilon b_1)$. Although the true shape of $\dot{V}$ is not known, we know that it is upper bounded by (\ref{eq:V_dot_upper}). From this, we can analyze the worst case norm of the state $\bx_{nr}$. To avoid misunderstanding, recall that the goal of this analysis to predict the worst case performance, while the stability is already guaranteed by Theorem \ref{thm:passivity}.

Because $\epsilon$ will be chosen small so that $\lambda_\epsilon \simeq \frac{1}{\epsilon}>0$, an upper bound for $||\bx_{nr}||$ can be obtained by solving $(\ref{eq:V_dot_upper})=0$ as follows:
\begin{align}
\nonumber ||\bx_{nr}||_\infty \leq& \frac{\epsilon b_2 + \sqrt{\epsilon^2 b_2^2 + 4b_3}}{2/\epsilon}  = \frac{b_2}{2}\epsilon^2 + \frac{\epsilon}{2}\sqrt{\epsilon^2b_2^2 + 4b_3} \\
	          \leq& a_1 \epsilon^2 + a_2 \epsilon,
\end{align}
for some constants $a_1,a_2>0$. Finally, because $\epsilon$ is chosen small (namely, $\epsilon$ dominates $\epsilon^2$) and $|e_{nr}[i]|_\infty \leq ||\bx_{nr}||_\infty$, the relation (\ref{eq:pm_predic}) follows.

\begin{figure}[]
\centering
{\includegraphics[scale=0.27]{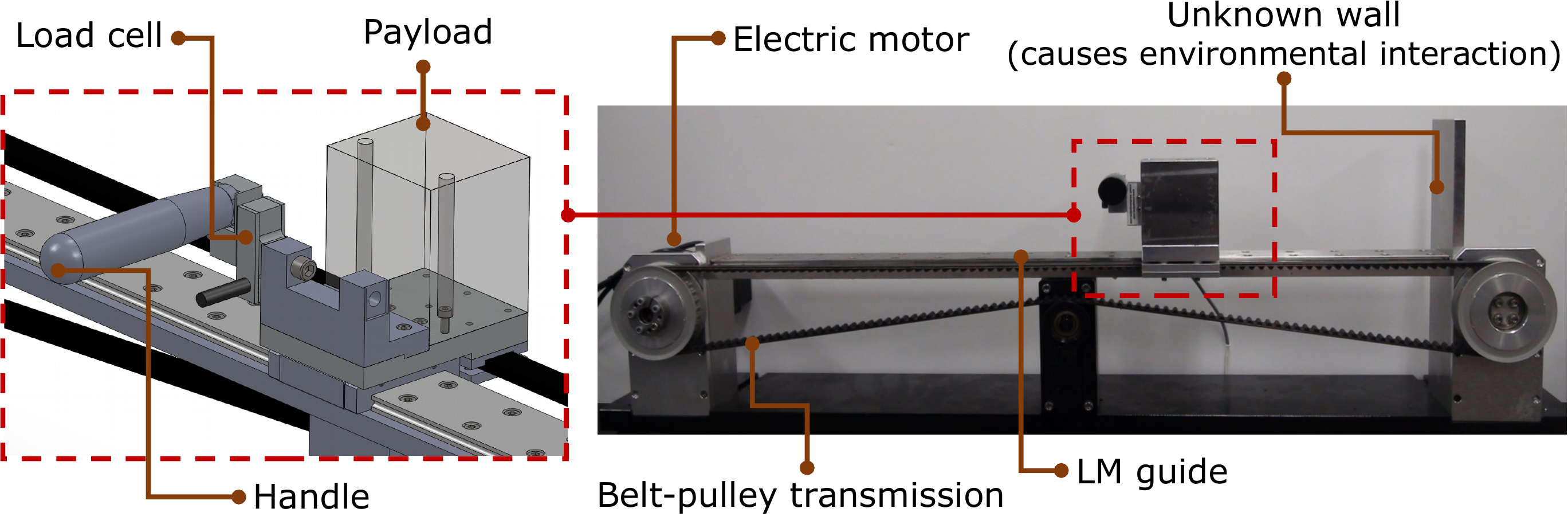}}
\caption{1 DoF testbed for validation in Section \ref{sec:exp_1dof}.}
\label{fig:1dof_testbed}
\end{figure}

\section{Experimental Validation using 1 DoF Testbed}
\label{sec:exp_1dof}

This section presents the validation of the proposed approach using 1 DoF testbed shown in Fig. \ref{fig:1dof_testbed}. This testbed is designed to emulate the scenario of interest introduced in Section \ref{sec:intro}.	A human operator grasps the handle to move the mass, and the interaction force $\tau_h$ is measured by the load cell. The mass of the testbed itself was approximately 1 $\mathrm{kg}$ and that of the payload was approximately 5 $\mathrm{kg}$. The environmental interaction occurred when the human operator pushes the mass toward the wall. We would like to remark that (i) the mass and (ii) environmental interaction were unknown to the controller. The proposed method was implemented using a real-time OS (RTX) with 500 Hz control frequency.

%

\begin{figure}[]
	\centering
	\includegraphics[scale=0.31]{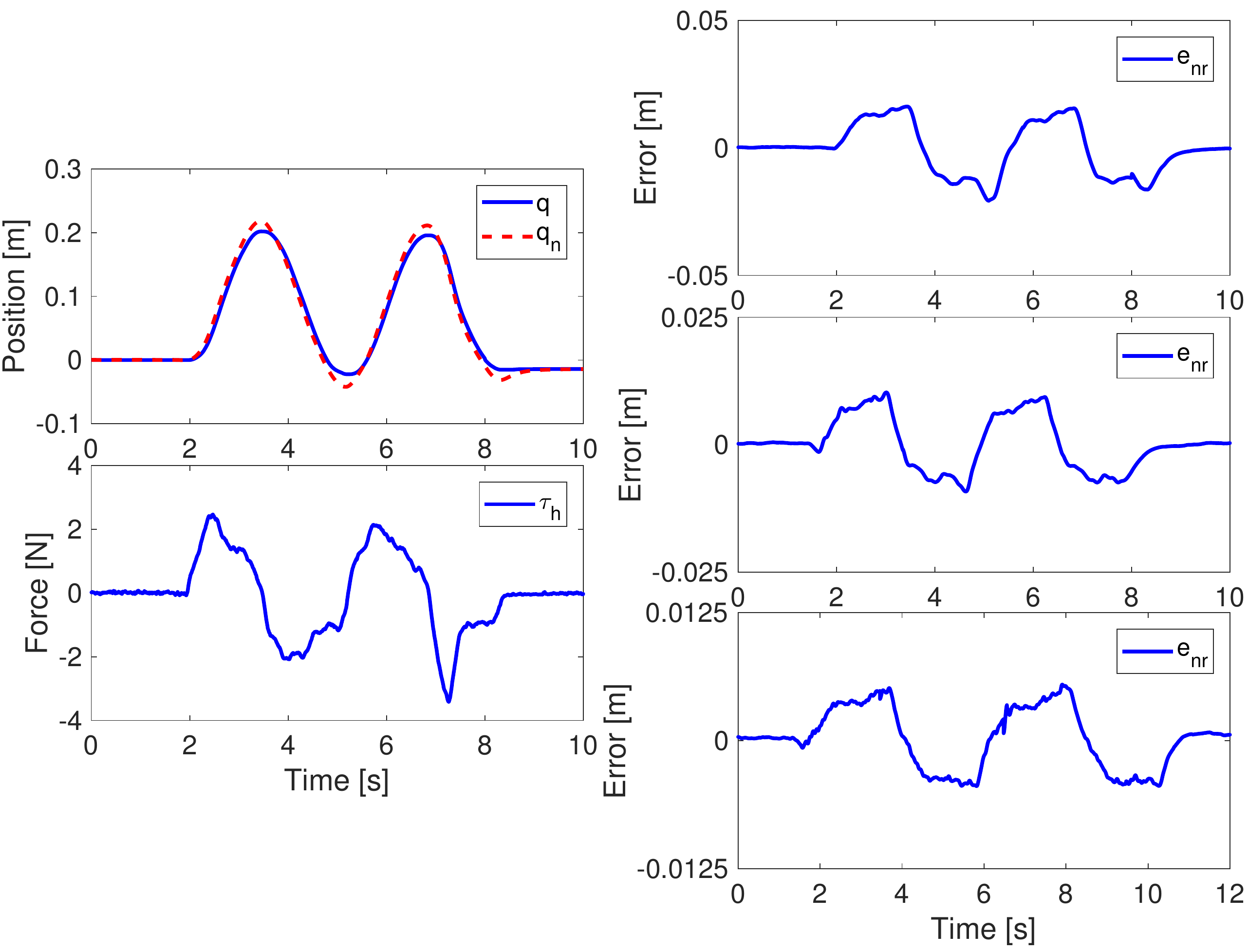}
	\caption{Gain tuning experiment using 1 DoF testbed. Left column: $q$, $q_n$ and $\tau_h$ with $K=7.5$. Human operator moved the mass sinusoidally in free space. Right column: $K=7.5$ (top), $K=15$ (middle), and $K=30$ (bottom). As the control gain was doubled (i.e., $\epsilon$ was halved), $|e_{nr}|_\infty$ decreased by half; note that the $y$ axis scale is halved as $\epsilon$ decreases by half. $|e_{nr}|_\infty$ was decreased by $0.0207 \rightarrow 0.0103 \rightarrow 0.00549$ $\mathrm{m}$.}
	\label{fig:1dof_perf}
\end{figure}

\begin{figure}[]
	\centering
	\subfigure
	{\includegraphics[scale=0.3]{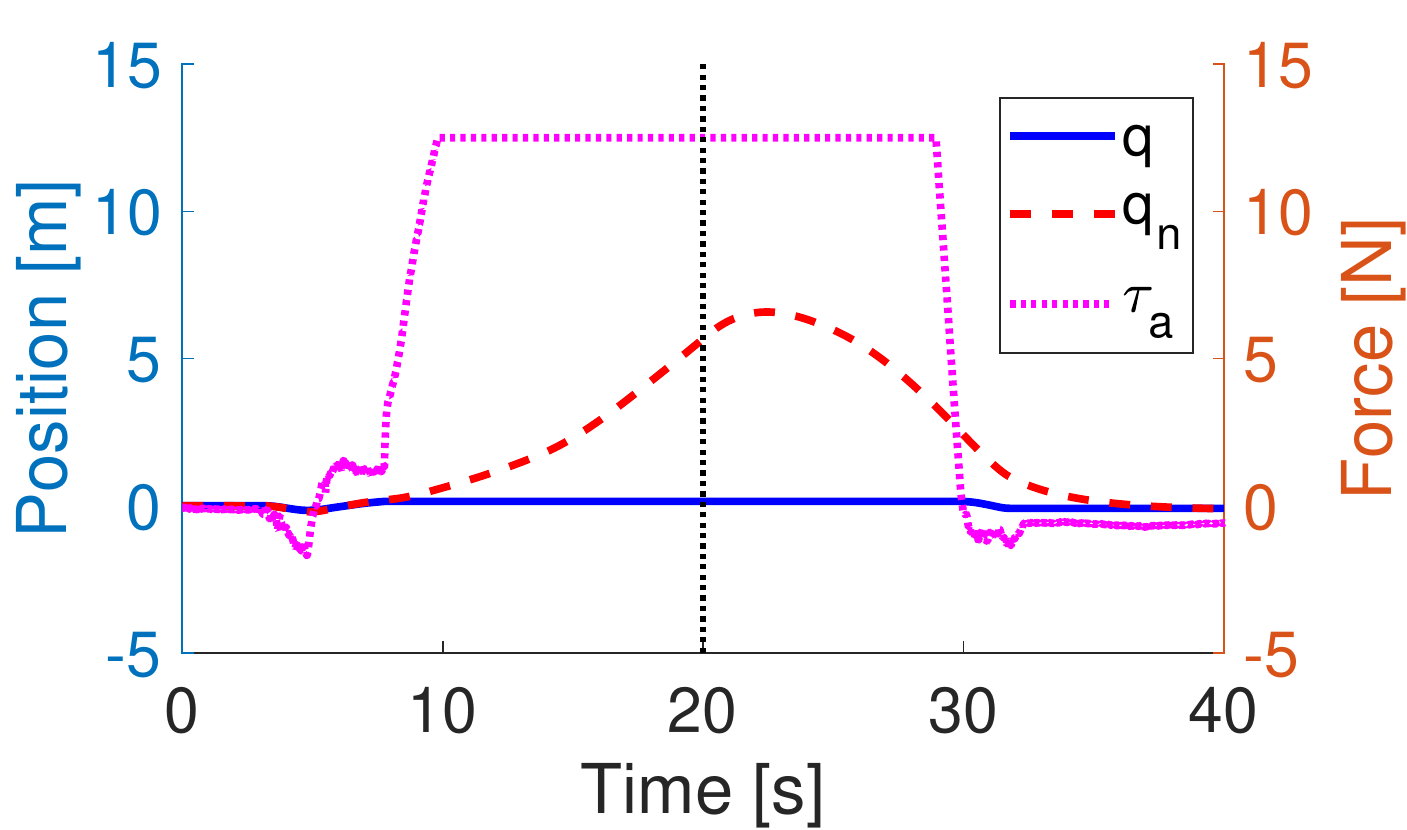}}
	\centering
	\subfigure
	{\includegraphics[scale=0.3]{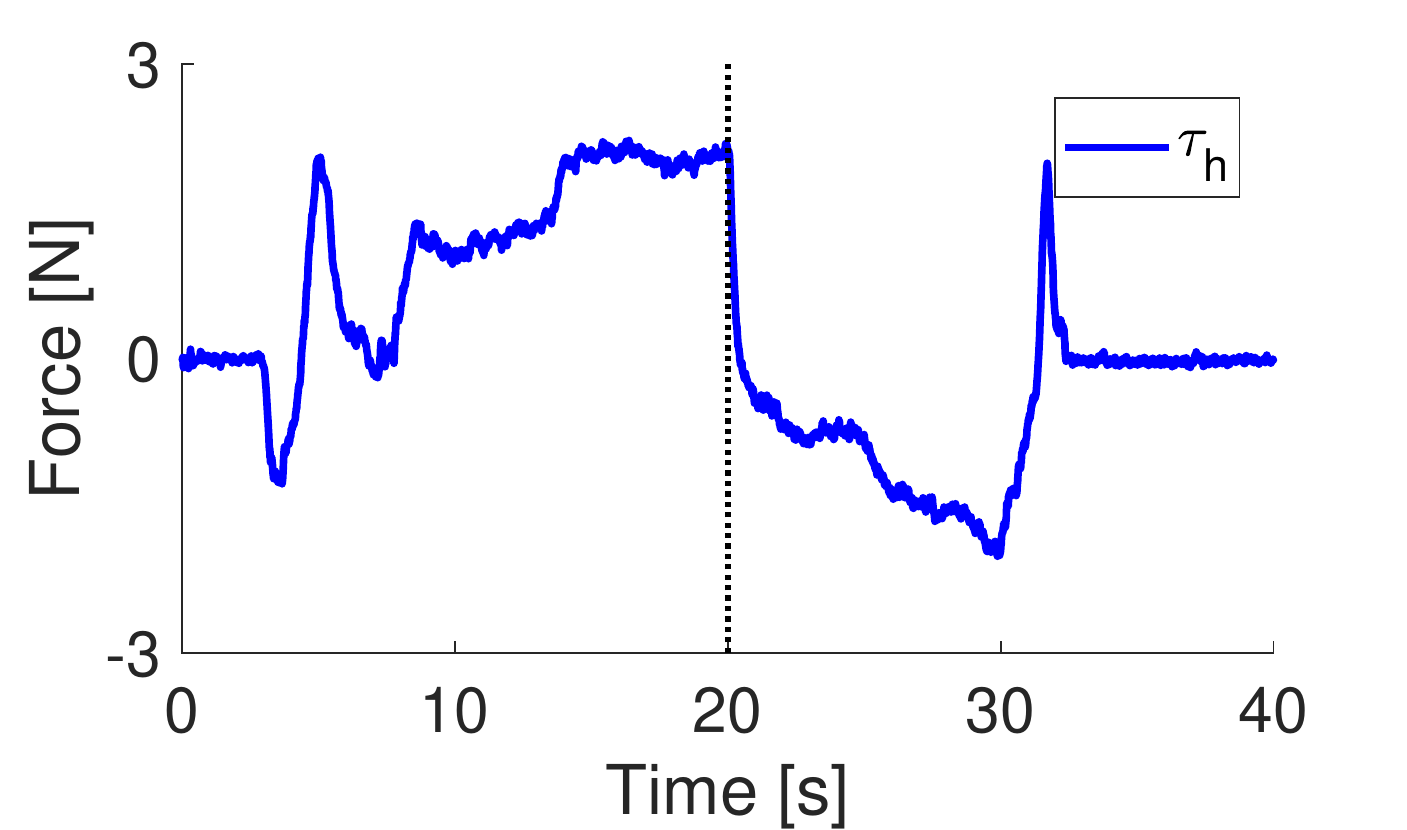}}
	\caption{Standard admittance controller was applied under the unknown environmental interaction (recall the motivating example). 0-10 $\mathrm{sec}$: moves in free space. 10-20 $\mathrm{sec}$: intentional contact with the wall (which is not known to the controller). 20-30 $\mathrm{sec}$: tried to detach the robot from the wall, but could not because the admittance control input is pushing the robot toward the wall (control input $\tau_a$ is saturated at 12.5 $\mathrm{N}$). After 30 $\mathrm{sec}$: nominal signal $q_n$ is reduced, and the robot could move back to the free space.}
	\label{fig:exp_1dof_standard}
\end{figure}

\begin{figure}[t]
	\centering
	\subfigure[$M_n=0.1$ $\mathrm{kg}$]
	{\includegraphics[scale=0.28]{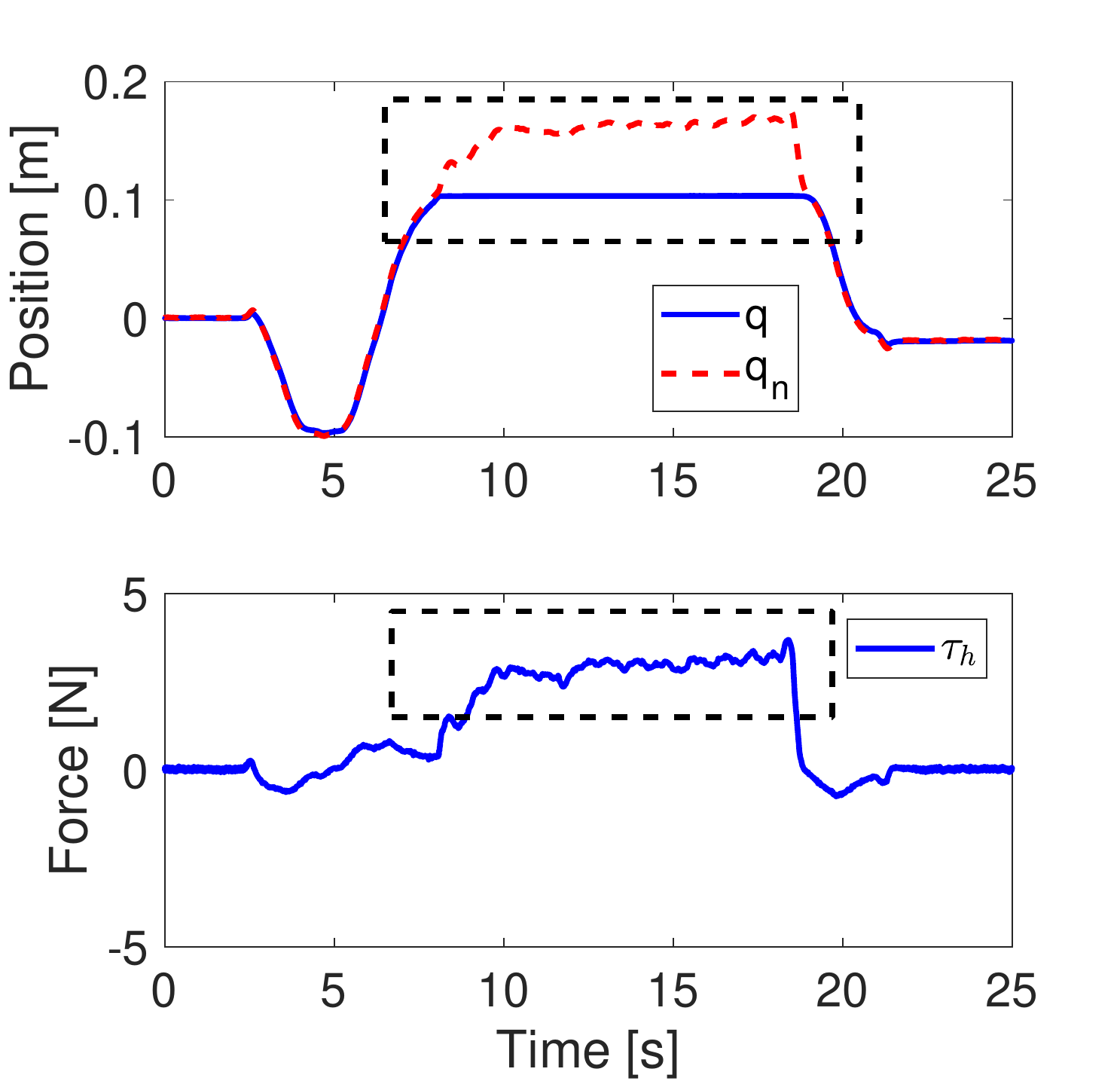}}
	\centering
	\subfigure[$M_n=10$ $\mathrm{kg}$]
	{\includegraphics[scale=0.28]{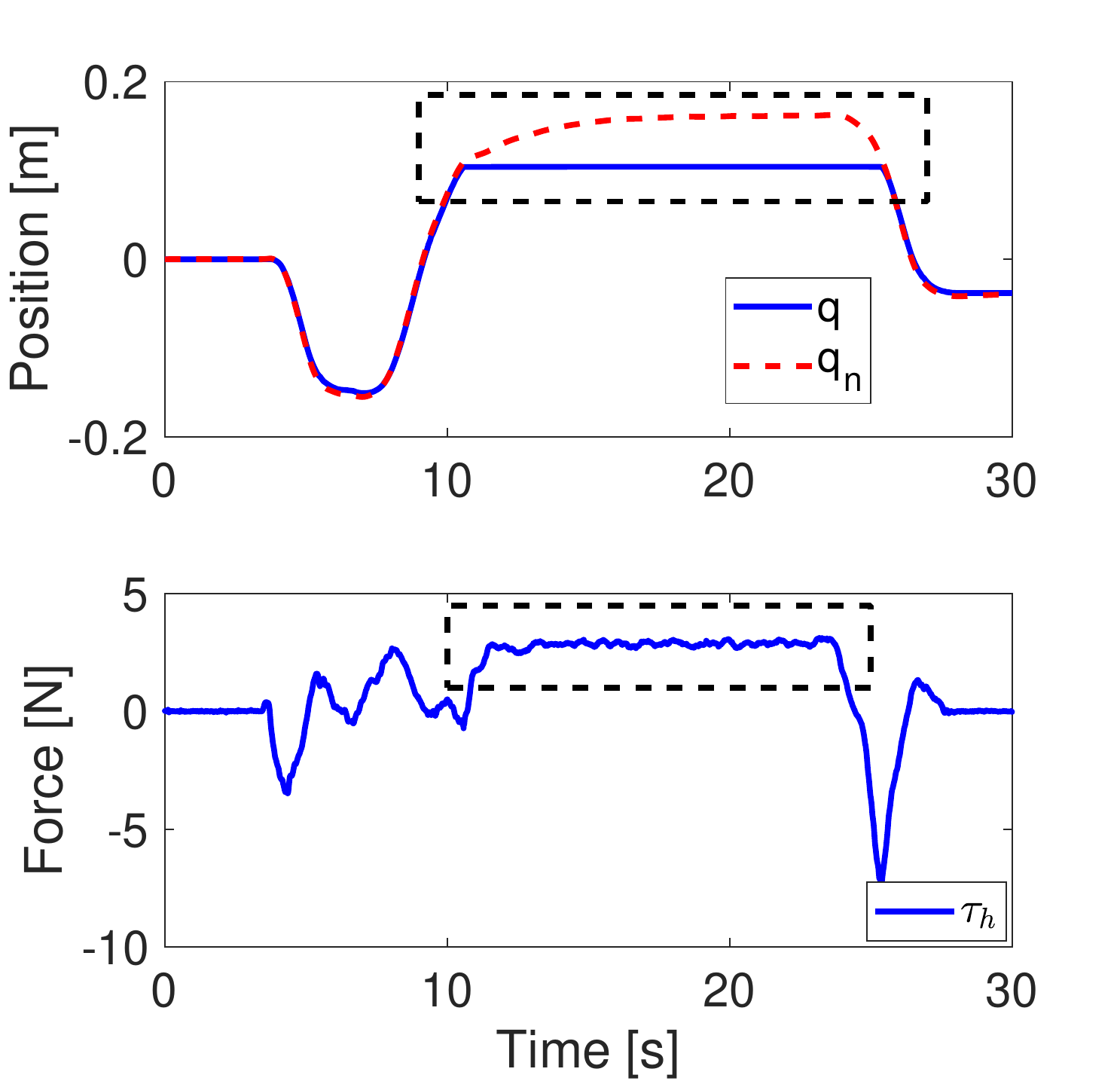}} 
	\caption{The proposed approach was applied under the unknown environmental interaction. Top row: $q$, $q_n$. Bottom row: interaction force $\tau_h$ measured using the load cell. Left column: $M_n=0.1$ $\mathrm{kg}$. Right column: $M_n=10$ $\mathrm{kg}$. Black dashed box shows that the nominal signal did not keep increasing although the force was constantly applied. } 
	\label{fig:1dof_passivity}
\end{figure}

\subsection{Performance limit}
\label{section4-1}

The performance was tuned according to Theorem \ref{thm:pm}. In the experiments, the human operator moved the mass sinusoidally in the free space (i.e., without environmental interaction) while changing the control gain. Letting $K_P=10$, $\epsilon$ was changed as $0.133 \rightarrow 0.067 \rightarrow 0.033$ (corresponding $K$ was $7.5 \rightarrow 15 \rightarrow 30$). Fig. \ref{fig:1dof_perf} shows human interaction forces and corresponding behaviors for each gain.\footnote{Force sensor signals are low pass filtered for better presentation (in every figure in this paper). However, raw signals were used in real implementations.} As expected from Theorem \ref{thm:pm}, the performance (in the sense of $|e_{nr}|_\infty$) is improved linearly with the control gain; $|e_{nr}|_\infty$ was decreased by $0.0207 \rightarrow 0.0103 \rightarrow 0.00549$ $\mathrm{m}$. Note that the $y$-axis of the right column of Fig. \ref{fig:1dof_perf} is halved together with $\epsilon$.

\subsection{Wall sticking effect}
\label{sec:wall_sticking}

To investigate the wall sticking effect, experiments with the following steps were performed. (i) Human operator applied sinusoidal force in free space; (ii) human operator pushed the mass toward the wall, and maintained contact for a while; (iii) human operator applied force in the opposite direction, to move the mass back into the free space.

As shown in Fig. \ref{fig:exp_1dof_standard}, the standard admittance controller suffers from wall sticking effect even if the real system parameter ($M_n=M=5\mathrm{kg}$) is known. At the beginning ($0$ to $10$ $\mathrm{sec}$), the mass moved according to the human operator's command. From $10$ to $20$ $\mathrm{sec}$, human operator pushed the mass into the wall to generate unknown environmental interaction. During this period, $q_n$  deviated from $q$, and the admittance control input is saturated because of large $e_{nr}$. From $20$ to $30$ $\mathrm{sec}$, the human operator tried to move back the mass from the wall, but the mass could not move because the operator had to wait until $q_n$ to decrease to a reasonable value. Note that, in this period, the admittance control input was still saturated. To summarize, the standard admittance controller showed wall-sticking effect under the unexpected interaction.

To show that the proposed approach reduces the wall sticking effect regardless of the parameters, it was implemented with two different $M_n$ while letting $D_n=0.5$ $\mathrm{Ns/m}$: $0.1$ and $10$ $\mathrm{kg}$ (the real mass was $5\mathrm{kg}$). As shown in Fig. \ref{fig:1dof_passivity}, for both cases, $q_n$ converged to a certain equilibrium due to the spring (recall the motivating example). Namely, the wall sticking effect was reduced.

\section{Powered Upper-Limb Control}
\label{sec:powered}

\subsection{Task Space Control}
\label{sec:task_space_control}

Although the generalized coordinates $\bq$ is used in the previous, the formulation in Cartesian space is more appealing in our application. The Cartesian space variable $\bp$ can be obtained using the Jacobian matrix $\bJ(\bq)$:
\begin{align}
\dot{\bp}=\bJ(\bq) \dot{\bq}.
\end{align}
Here, $\bp$ is the position vector (in our formulation, $\bp=[p_x \;\; p_y \;\; p_z]^T \in \Re^3$ represents the position in each direction).

The robot dynamics can be expressed in Cartesian space as
\begin{align}
\bLambda (\bq) \ddot{\bp} + \bGamma (\bq,\dot{\bq}) \dot{\bp} + \bzeta (\bq) = \bff_a + \bff_h + \bff_f + \bff_e,
\end{align}
where $\bLambda$, $\bGamma$, and $\bzeta$ represent the inertia, the Coriolis/centrifugal force, and the gravity force, respectively. Note that the control commands in Cartesian space are transformed into the joint space using the Jacobian transpose. Namely, 
\begin{align}
 \btau_a = \bJ^T(\bq) \bff_a = \bJ^T(\bq) \bK (\dot{\be}_{nr} + \bK_P \be_{nr}).
\end{align}
Here, $\be_{nr}=\bp_n - \bp$ is redefined for simplicity of writing.

The nominal dynamics is
\begin{align}
\bLambda_n \ddot{\bp}_n + \bD_n \dot{\bp}_n  = \bff_h - \bK^{-1} \bff_a.
\label{eq:obs_dyn_cart}
\end{align}
In our implementation, the parameters were defined by $\bLambda_n=diag\{5,5,5\}$ $\mathrm{kg}$ and $\bD_n = diag\{ 20, 20, 20 \}$  $\mathrm{kg/s}$, so the behaviors in the $x$-, $y$-, and $z$- axes are identical and decoupled.

\begin{figure}[]
	\centering
	{\includegraphics[scale=0.38]{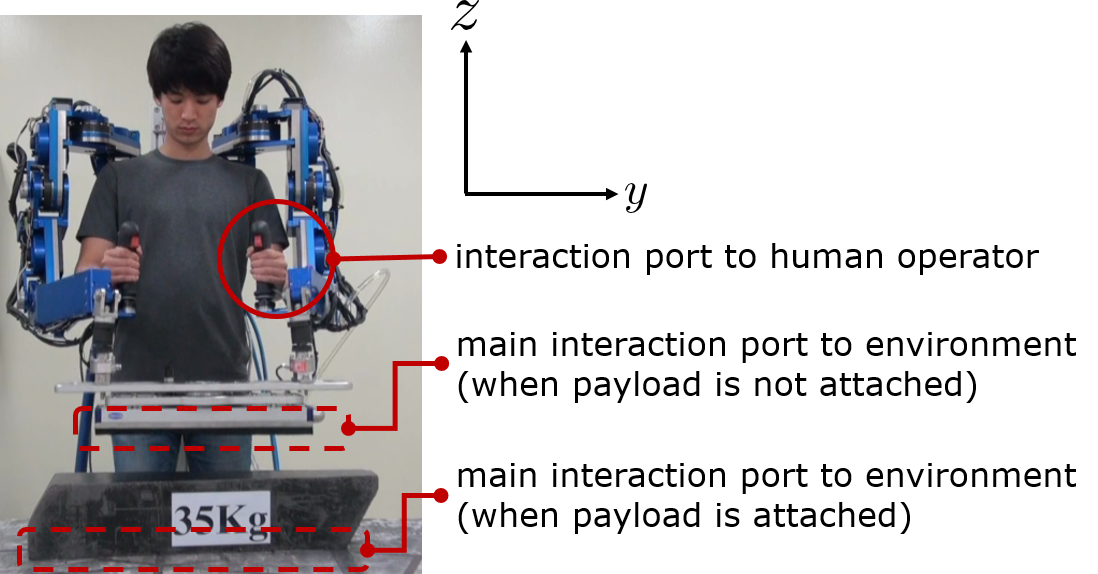}}\\
	\caption{Experimental setup for validation in Section \ref{sec:task_space_control}.}
	\label{fig:exp_setup}
\end{figure}

\subsection{Experimental Setup}

An actual powered upper-limb robot shown in Fig. \ref{fig:exp_setup} was used to evaluate the proposed approach. Recall Fig. \ref{fig:hardware}b for hardware details. The robot is supposed to lift and maneuver a payload with unknown mass. The robot carries the payload using the suction pad attached to the end-effectors. When attaching/detaching the payload, interactions with the environment are inevitable because the human operator should command the robot to move towards the environment. Recall that the environmental interaction is unknown to the controller.


\subsection{Experimental Results}

\begin{figure*}[]
	\centering
	\subfigure[]
	{\includegraphics[scale=0.39]{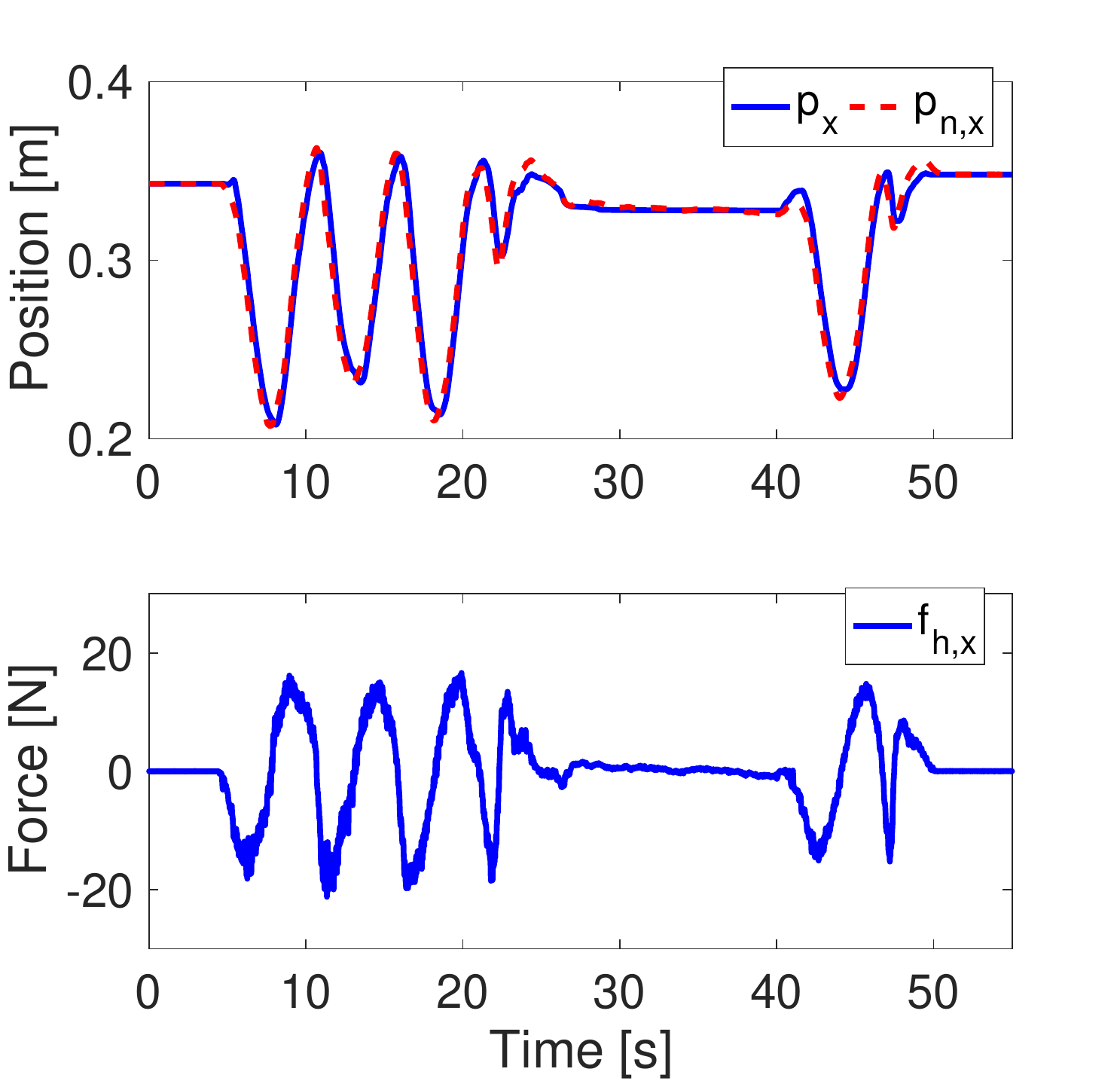}}
	\centering
	\subfigure[]
	{\includegraphics[scale=0.39]{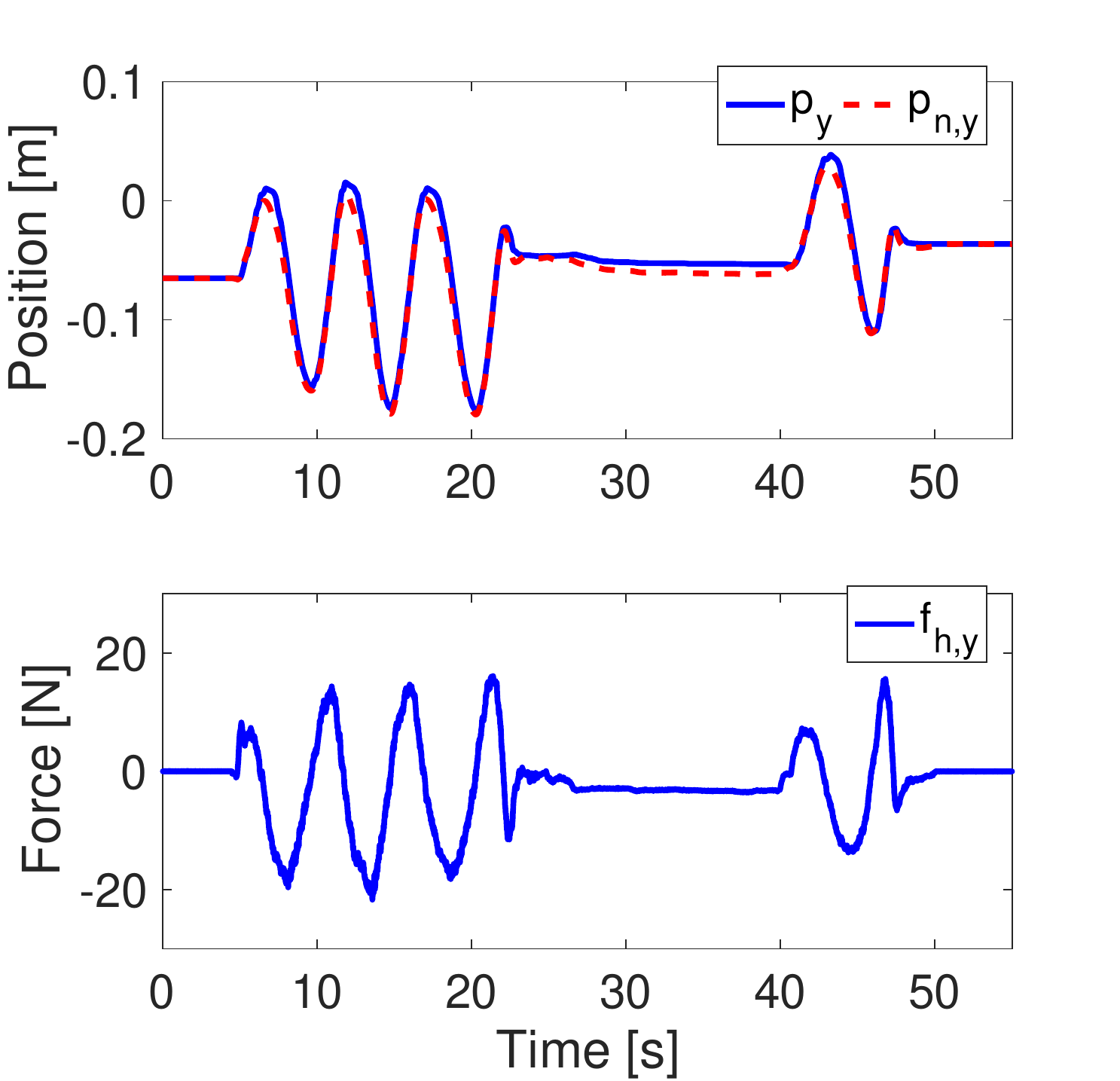}} 
	\centering
	\subfigure[]
	{\includegraphics[scale=0.39]{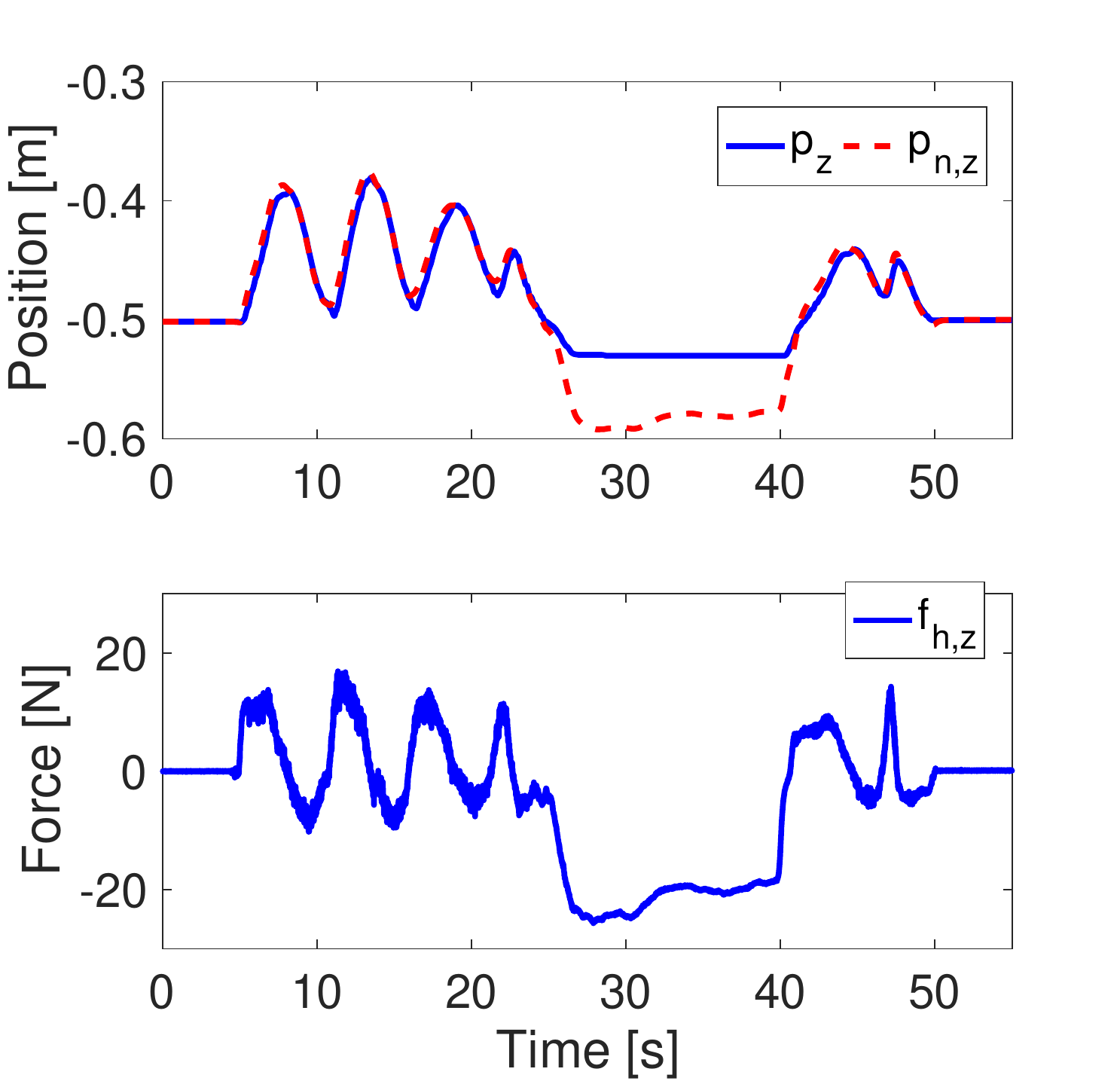}} 
	\caption{The robot interacts with not only the human operator (5-50 $\mathrm{sec}$), but also with the (unknown) environment (25-40 $\mathrm{sec}$). The top row shows $\bp$ and $\bp_n$, and the bottom row shows the applied force $\bff_h$. }
	\label{fig:exp_passivity}
\end{figure*}

Three experiments were performed for verification. Although two arms (left and right) are operated concurrently, the results for only the left arm will be presented because those for the right arm are similar.

\begin{figure*}[]
\subfigure[]
{\includegraphics[scale=0.39]{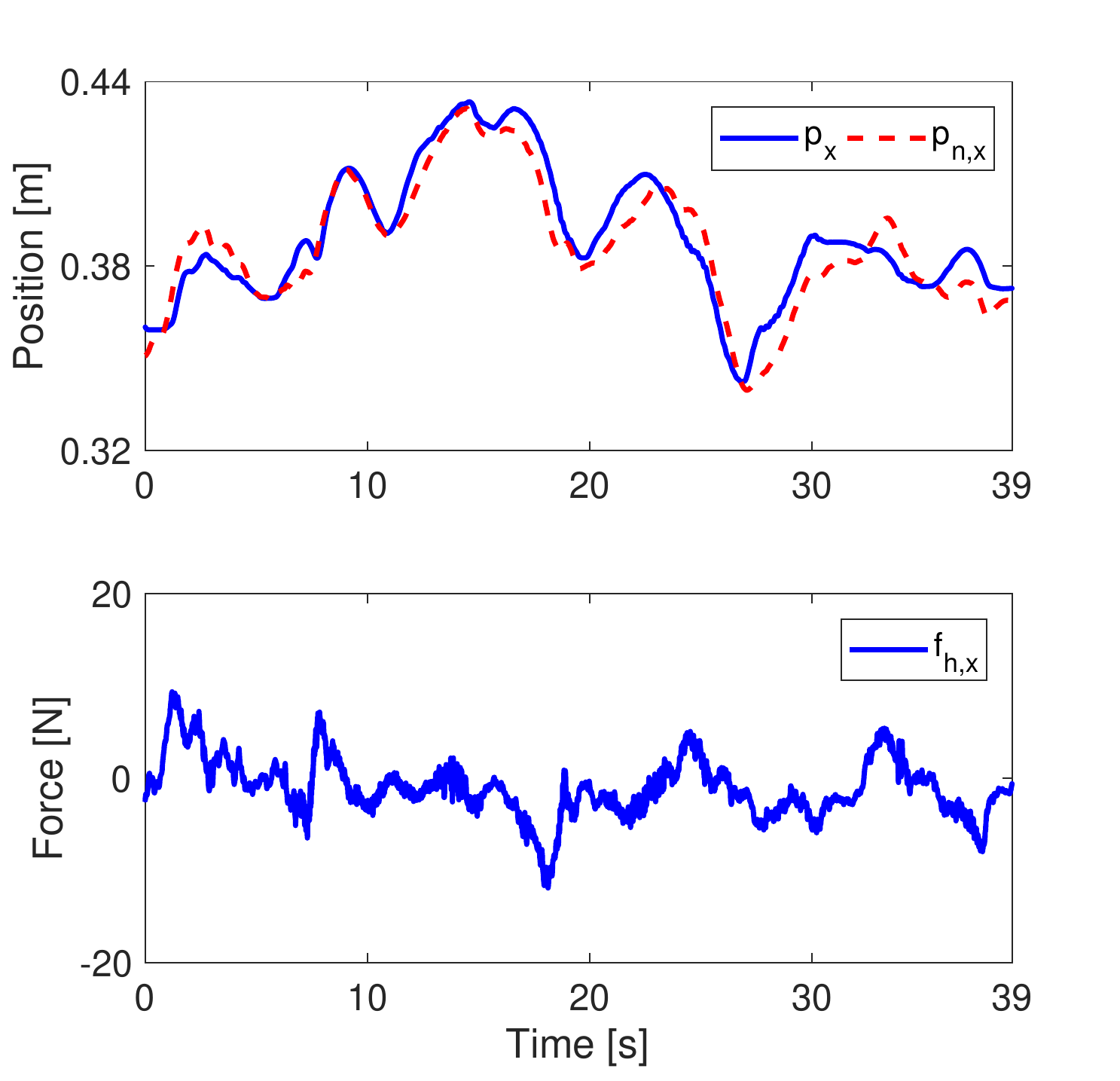}}
\centering
\subfigure[]
{\includegraphics[scale=0.39]{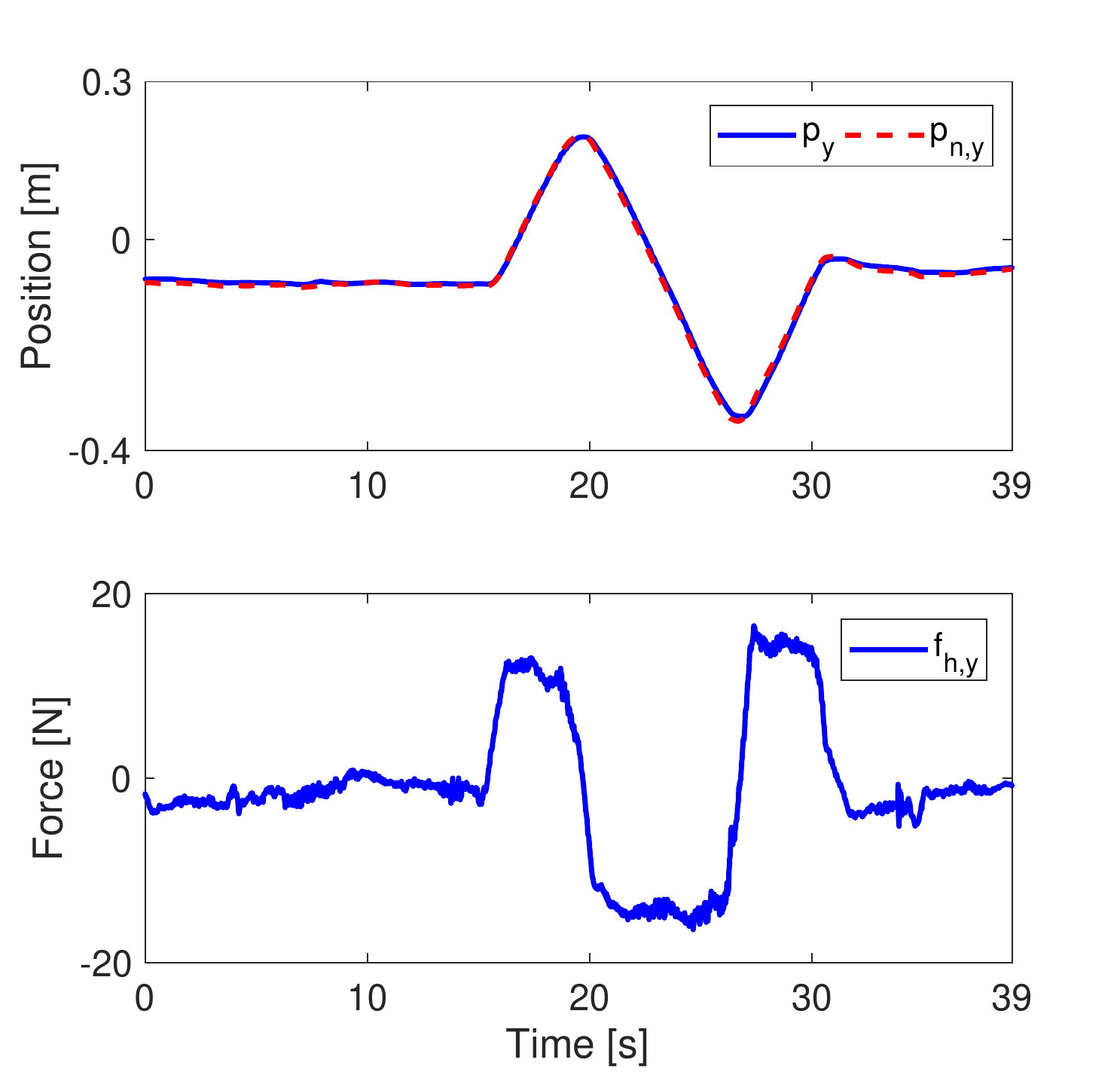}} 
\centering
\subfigure[]
{\includegraphics[scale=0.39]{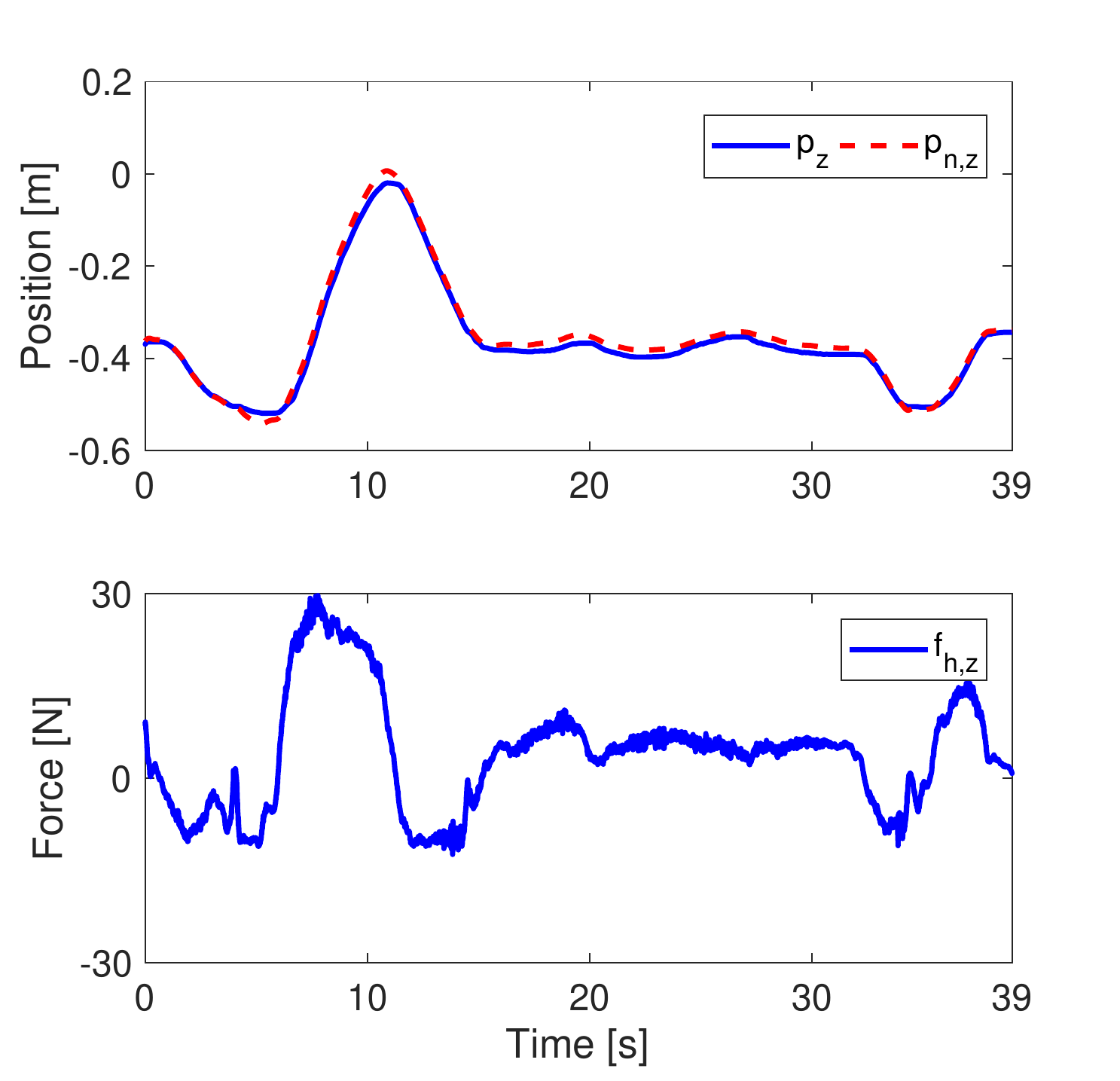}} \\
\centering
\subfigure[initial position]
{\includegraphics[scale=0.65]{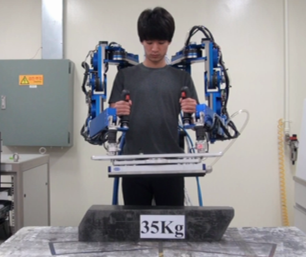}}
\centering
\subfigure[attaching a payload]
{\includegraphics[scale=0.65]{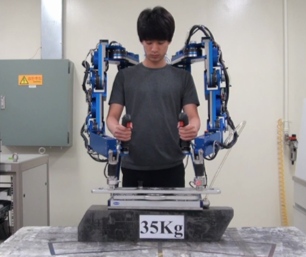}}
\centering
\subfigure[lifting up]
{\includegraphics[scale=0.65]{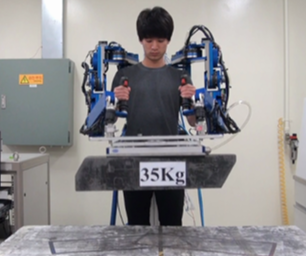}} 
\centering
\subfigure[lifting high]
{\includegraphics[scale=0.65]{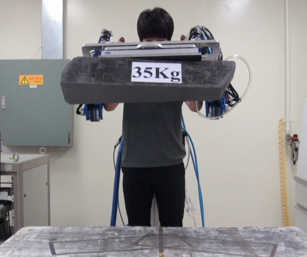}} 
\centering
\subfigure[moving to center position]
{\includegraphics[scale=0.65]{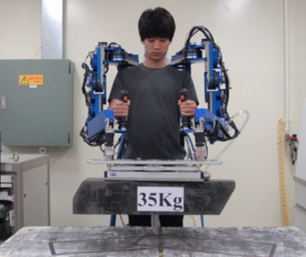}} \\
\centering
\subfigure[moving left]
{\includegraphics[scale=0.65]{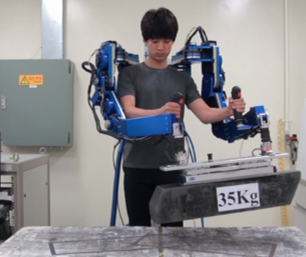}}
\centering
\subfigure[moving right]
{\includegraphics[scale=0.65]{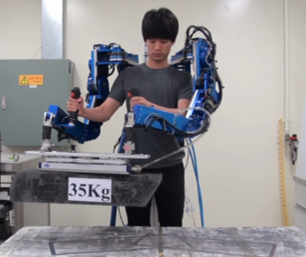}} 
\centering
\subfigure[moving to center position]
{\includegraphics[scale=0.65]{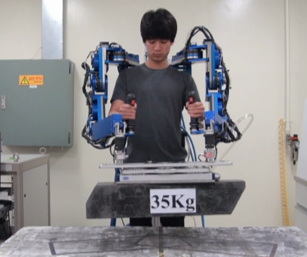}} 
\centering
\subfigure[detaching a payload]
{\includegraphics[scale=0.65]{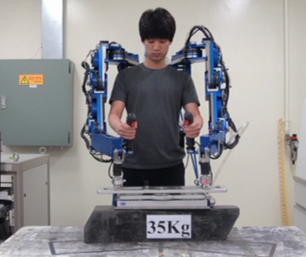}}
\centering
\subfigure[return to initial position]
{\includegraphics[scale=0.65]{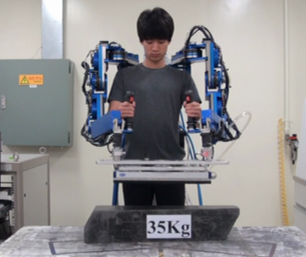}} 
\caption{Actual test of the robot being used to lift and maneuver a payload (weight is not known to the controller). (d)-(m) Photographs of the robot in use. Unknown environmental interaction occurred at (e) and (l).}
\label{fig:exp_actual}. 
\end{figure*}

\subsubsection{Experiment 1 - Performance limit}

The control gain was tuned using Theorem \ref{thm:pm} similar to Section \ref{section4-1}. To this end, with $\bK_P = 20 \bI$, the value of $\epsilon$ was set to $0.004 \rightarrow 0.002 \rightarrow 0.001$ (corresponding $\bK$ was $250\bI \rightarrow 500 \bI \rightarrow 1000\bI$). The human operator applied a sinusoidal input force in the $x$-direction. The values of $|e_{nr}[1]|_\infty$ were $0.046$, $0.022$, and $0.011$ for $\epsilon=0.004$, $0.002$, and $0.001$, respectively (the resulting values are provided without plots due to the space limit). As a result, $\bK_P = 20 \bI$ and $\bK=1000\bI$ were chosen in the following experiments.

\subsubsection{Experiment 2 -  Wall sticking effect}

Similar to Section \ref{sec:wall_sticking}, this section verifies the interaction capability. 
\begin{enumerate}[(a)]
\item{For $t=5$ to $25$ $\mathrm{sec}$, the human operator operated the robot sinusoidally in the $x$-, $y$-, $z$-directions simultaneously. The robot followed the human operator's command (Fig. \ref{fig:exp_passivity}).}
\item{For $t=25$ to $40$ $\mathrm{sec}$, a force of approximately 20 $\mathrm{N}$ was applied in the negative $z$-direction. The robot could not move in response because of the contact with the environment (see $p_z$ in Fig. \ref{fig:exp_passivity}). The nominal position (or velocity) did not  increase even though the force was constantly applied (see $p_{n,z}$ and $f_{h,z}$ in Fig. \ref{fig:exp_passivity}).}
\item{For $t=40$ to $50$ $\mathrm{sec}$, subsequent to the interaction with the environment, sinusoidal input forces in the $x$-, $y$-, and $z$- directions were again simultaneously applied. Note that the robot did not suffer from wall sticking effect. }
\end{enumerate}

\subsubsection{Experiment 3 - Payload carrying}

The robot was used to lift a payload with a mass of 35 $\mathrm{kg}$, which was unknown to the controller. Fig. \ref{fig:exp_actual}a-c shows the position and forces measured during the operation. As shown in the figure, the payload was lifted in the $z$-direction, moved sinusoidally in the $y$-direction, and then placed on the ground. Fig. \ref{fig:exp_actual}d-m shows photographs of the experiment. This experiment verified that the proposed admittance controller can be applied to  payload carrying applications with unknown environmental interaction.

\section{Conclusion}
\label{sec:conclusion}

This study proposed a passivity-based nonlinear admittance controller that fully captures the nonlinearities of robot dynamics. The proposed control structure can be represented as a feedback interconnection of passive subsystems, and therefore dissipativity of the closed-loop dynamics is guaranteed. By virtue of passivity, the robot can interact with the human operator and unknown environment stably. The two-time scale analysis was used to show that the proposed control scheme is an admittance controller. Moreover, the performance was analyzed for the finite control gains. The analysis shows that the performance increases linearly with the control gain.

The proposed approach was verified using a 1 DoF testbed. Moreover, experiments were performed using an actual powered upper-limb exoskeleton robot. After validating the performance analysis and interaction capabilities, our original purpose of this study was validated. Namely, the powered upper-limb robot was used to lift and maneuver a payload of unknown mass under the unknown environmental interaction.


%

%
%
%
%
%
%

\ifCLASSOPTIONcaptionsoff
  \newpage
\fi



\bibliographystyle{IEEEtran}
\bibliography{IEEEabrv,TMECH_upper_limb}
\begin{IEEEbiography}[{\includegraphics[width=1in,height=1.25in,clip,keepaspectratio]{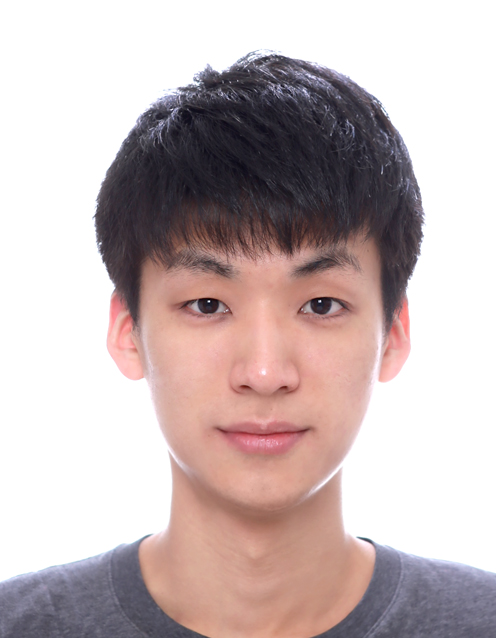}}]{Min Jun Kim}
	received the B.S. degree in mechanical engineering from Korea University, Seoul, Korea, in 2010, and received the Ph.D. degree in mechanical engineering from the Robotics Lab, Pohang University of Science \& Technology (POSTECH), Pohang, Korea, in 2016. He is currently a research scientist in Robotics and Mechatronics Center (RMC), German Aerospace Center (DLR). 

	His research interests include nonlinear robot control, flexible joint robots, kinematically redundant robots, physical human-robot interaction, and control of under-actuated systems.

\end{IEEEbiography}
\begin{IEEEbiography}[{\includegraphics[width=1in,height=1.25in,clip,keepaspectratio]{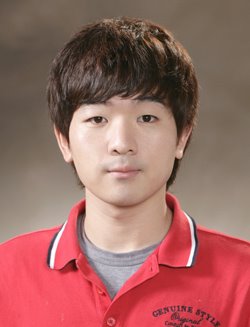}}]{Woongyong Lee} 
	 received the B.S. degree in mechanical engineering from Sungkyunkwan University, Seoul, Korea, in 2012. He is currently working toward the Ph.D. degree with the Robotics Laboratory, Pohang University of Science and Technology, Pohang, Korea.
	 
	 His research interests include the development and robust control of electro-hydraulic systems, and their applications to the robotic systems.
\end{IEEEbiography}
\begin{IEEEbiography}[{\includegraphics[width=1in,height=1.25in,clip,keepaspectratio]{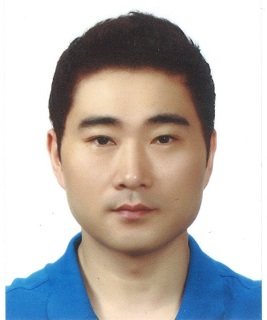}}]{Jae Yeon Choi} 
received the B.S., M.S., and Ph.D. degree from the Department of Control and Instrumentation Engineering, Hanyang University, Seoul, South Korea, in 2002, 2004, and 2012, respectively. He is currently a Senior Researcher of the Korea Institute of Robot and Convergence, Pohang, South Korea. 

His research interests include electro-hydraulic actuators, power assist robot, wearable robot, and kinematically redundant robot systems.
\end{IEEEbiography}
\begin{IEEEbiography}[{\includegraphics[width=1in,height=1.25in,clip,keepaspectratio]{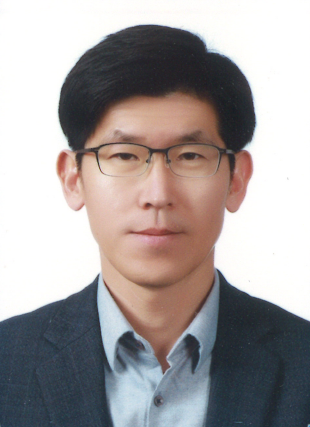}}]{Goobong Chung} 
received the B.S., M.S., and Ph.D. degree from the Department of Control and Instrumentation Engineering, Hanyang University, South Korea, in 1998, 2000, and 2005, respectively. From 2006 to 2007, he was a Postdoctoral Fellow with the Robotics Institute, Carnegie Mellon University, USA. In 2008, he joined the Korea Institute of Robot and Convergence, Pohang, South Korea, where he is currently a Senior Director of R\&D Division.

His research interests include the development of pipeline inspection robots, mobile manipulators, wearable robots, and an electro-hydraulic actuator for robots.
\end{IEEEbiography}
\begin{IEEEbiography}[{\includegraphics[width=1in,height=1.25in,clip,keepaspectratio]{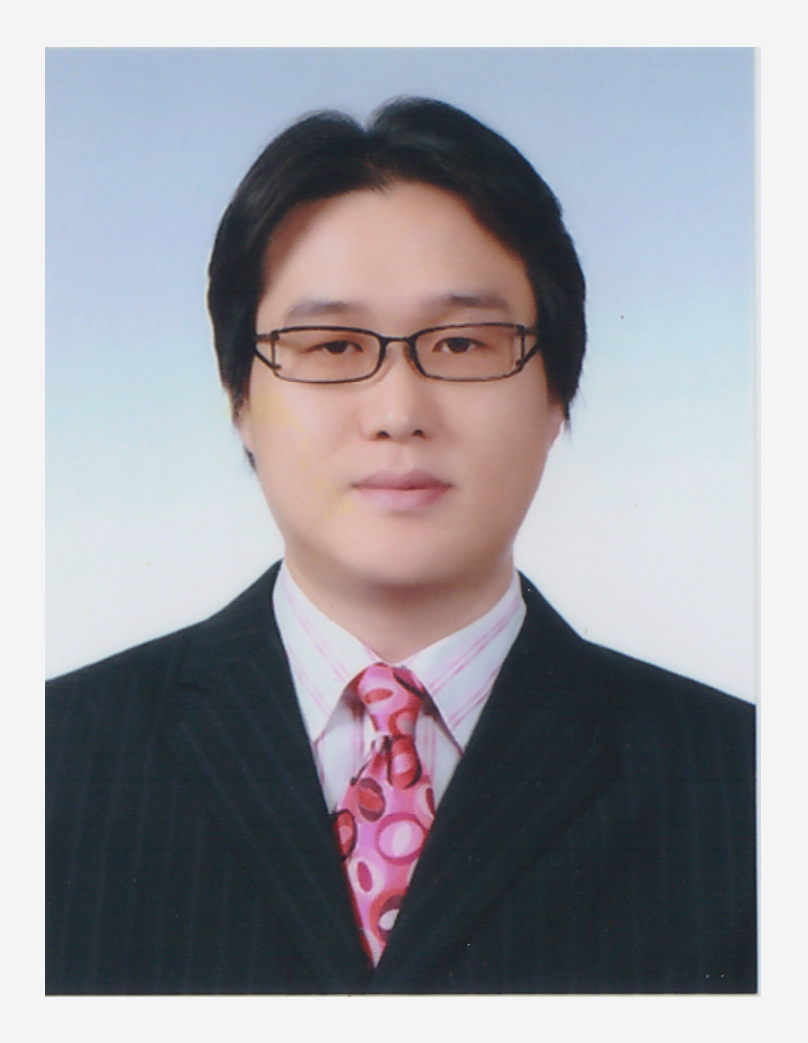}}]{Kyung-Lyong Han} 
received B.S. and M.S. degree in Electronic and Electrical Engineering from Pohang University of Science and Technology (POSTECH), Pohang, Korea, in 2002 and 2004, and received the Ph.D. degree in Electronic and Electrical Engineering from Robotics and Automation Lab., POSTECH, in 2011. He worked as a assistant research engineer at the Automation Research Team of the Production Technology Research Institute, Samsung Heavy Industries from January 2004 to March 2006, and principal research engineer at the Control and Instrumentation Research Group of the Technical Research Laboratories, POSCO from February 2011 to November 2018. He is a principal engineer in Artificial Intelligence Team of Mobile Communications Business, Samsung Electronics since December 2018.

His research interests include the development of social robot and industrial robot, robot control, and artificial intelligence algorithm.
\end{IEEEbiography}
\begin{IEEEbiography}[{\includegraphics[width=1in,height=1.25in,clip,keepaspectratio]{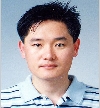}}]{Il Seop Choi} 
received the B.S. and M.S. degrees in electrical engineering from Pusan National University, Pusan, South Korea, in 1990 and 1992, respectively, and he received the Ph.D. degree in Automatic Control and Systems Engineering from the University of Sheffield, Sheffield, UK, in 2006. In 1992, he joined the Instrumentation and Control Research Group, POSCO, Pohang, South Korea, where he is currently a Senior Principal Researcher. 

His research interests include control, automation and robotics applications for steel manufacturing processes.
\end{IEEEbiography}
\begin{IEEEbiography}[{\includegraphics[width=1in,height=1.25in,clip,keepaspectratio]{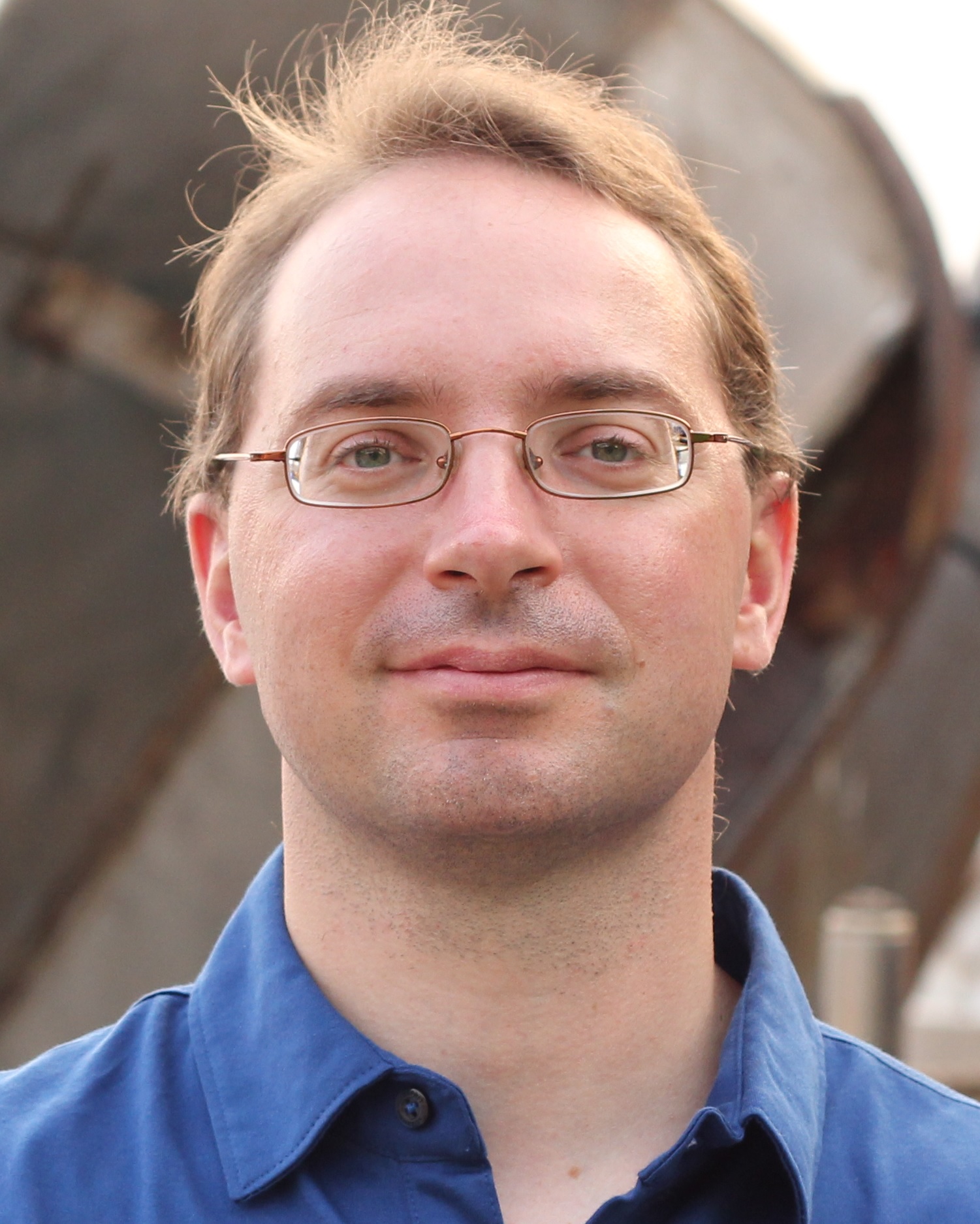}}]{Christian Ott} 
	received the Dipl.-Ing. Degree in mechatronics from the Johannes Kepler University (JKU), Linz, Austria, in 2001, and the Dr.-Ing. degree in control engineering from Saarland
	University, Saarbruecken, Germany, in 2005. From 2001 to 2007, he was with the German Aerospace Center (DLR), Institute of Robotics and Mechatronics, Wessling, Germany. From May 2007 to
	June 2009, he was a Project Assistant Professor in the Department of Mechano-Informatics, University of Tokyo, Japan. From 2011-2016 he has been working at DLR as team leader of the Helmholtz Young Investigators Group for ``Dynamic Control of Legged Humanoid Robots''. In January 2014, he became head of the Department of Analysis and Control of Advanced Robotic Systems at DLR. He received several scientific awards including an ERC consolidator grant, the ``Conference Best Paper Award'' at IEEE/RAS HUMANOIDS 2011, the “Dr.-Eduard-Martin”-Prize 2007, the Industrial Robot Outstanding Paper Award 2007, ICRA Best Video Award 2007, and the Best Paper Awards at the VDI-Conference ``Mechatronik-2005'' and from ``at-Automatisierungstechnik'' in 2005.

	His current research interests include nonlinear robot control, flexible joint robots, impedance control, and control of humanoid robots.
\end{IEEEbiography}
\begin{IEEEbiography}[{\includegraphics[width=1in,height=1.25in,clip,keepaspectratio]{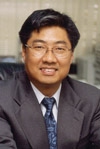}}]{Wan Kyun Chung} 
	received the B.S. degree in mechanical design from Seoul National University, Seoul, South Korea, in 1981, and the M.S. degree in mechanical engineering and the Ph.D. degree in production engineering from the Korea Advanced Institute of Science and Technology, South Korea, in 1983 and 1987, respectively. In 1987, he joined the School of Mechanical Engineering, Pohang University of Science and Technology, Pohang, South Korea, where he is currently a Professor. In 1988, he was a Visiting Professor with the Robotics Institute,  Carnegie Mellon University, Pittsburgh, PA, USA. In 1995, he was a Visiting Scholar with the University of California, Berkeley, CA, USA. 

	His research interests include the development of robust controllers for precision motion control, biomedical robotics and AI based biorobotics. Dr. Chung is an Editor-in-Chief of the Journal of Intelligent Service Robotics by Springer.
\end{IEEEbiography}

\end{document}